\DeclareMathOperator*{\argmin}{arg\,min}
\newcommand{\calD}{\mathcal{D}}
\newcommand{\calF}{\mathcal{F}}
\newcommand{\calM}{\mathcal{M}}
\newcommand{\calQ}{\mathcal{Q}}
\newcommand{\R}{\mathbb R}
\newcommand{\E}{\mathbb{E}}
\def\norm#1{\left\|#1\right\|}
\def\inner#1{\left\langle#1\right\rangle}
\newcommand{\vect}{{\rm Vec}}
\newcommand{\dist}{{\rm dist}}
\newtheorem{definition}{Definition}[section]
\newtheorem{theorem}{Theorem}[section]
\newtheorem{proposition}{Proposition}[section]
\newtheorem{assumption}{Assumption}[section]
\definecolor{iccvblue}{rgb}{0.21,0.49,0.74}
\title{Memory-Efficient 4-bit Preconditioned Stochastic Optimization}
\author{
Jingyang Li$^{1}$ \hspace{2em} Kuangyu Ding$^{1}$ \hspace{2em} Kim-Chuan Toh$^{1}$ \hspace{2em} Pan Zhou$^{2}$ \\
$^{1}$ National University of Singapore\hspace{4em} $^{2}$ Singapore Management University\\
{\tt\small $^{1}$\{li\_jingyang,kuangyud\}@u.nus.edu} \hspace{0.5em} {\tt\small mattohkc@nus.edu.sg} \hspace{3em} {\tt\small $^{2}$panzhou@smu.edu.sg}
}
\begin{document}
\maketitle
\begin{abstract}
Preconditioned stochastic optimization algorithms, exemplified by Shampoo, outperform first-order optimizers by offering theoretical convergence benefits and practical gains in large-scale neural network training. However, they incur substantial memory overhead due to the storage demands of non-diagonal preconditioning matrices. To address this, we introduce 4-bit quantization for Shampoo's preconditioners. We introduce two key methods: First, we apply Cholesky decomposition followed by quantization of the Cholesky factors, reducing memory usage by leveraging their lower triangular structure while better preserving spectral properties to minimize information loss. To our knowledge, this is the first quantization approach applied to Cholesky factors of preconditioners. Second, we incorporate error feedback in the quantization process, efficiently storing Cholesky factor and error state in the lower and upper triangular parts of the same matrix. Through extensive experiments, we demonstrate that combining Cholesky quantization with error feedback enhances memory efficiency and algorithm performance in large-scale deep-learning tasks. Theoretically, we also provide convergence proofs for quantized Shampoo under both smooth and non-smooth stochastic optimization settings.
\end{abstract}

\section{Introduction}
\vspace{-0.5em}
Deep learning has achieved significant advancements across numerous fields in recent years, including language modeling \cite{brown2020language,touvron2023llama}, computer vision \cite{dosovitskiy2020image}, and multi-modality \cite{radford2021learning}. These advancements are primarily driven by the scaling of model size, dataset volume, and computational power, as outlined in scaling laws that demonstrate the impact of increased resources on model performance \cite{kaplan2020scaling,hoffmann2022training}. This trend of scaling has further extended into specialized domains such as finance \cite{wu2023bloomberggpt}, material science \cite{xie2018crystal}, and healthcare \cite{lee2020biobert}.

Along with the size growth of large-scale models, stochastic gradient descent (SGD) has become a widely adopted method for training thanks to its efficiency and simplicity \cite{robbins1951stochastic, sutskever2013importance, he2016deep}. However, adaptive gradient methods, e.g., Adagrad \cite{duchi2011adaptive}, Adam \cite{kingma2014adam}, and AdamW \cite{loshchilov2019decoupled}, apply a diagonal preconditioning to the gradient, which enables faster convergence than SGD \cite{duchi2011adaptive,zhang2020adaptive}. These adaptive methods have demonstrated empirical advantages in various applications \cite{dosovitskiy2020image, xie2024adan} and are now the standard optimizers for training large-scale neural networks.

\begin{figure}[t]
	\centering
	\includegraphics[width=1.\linewidth]{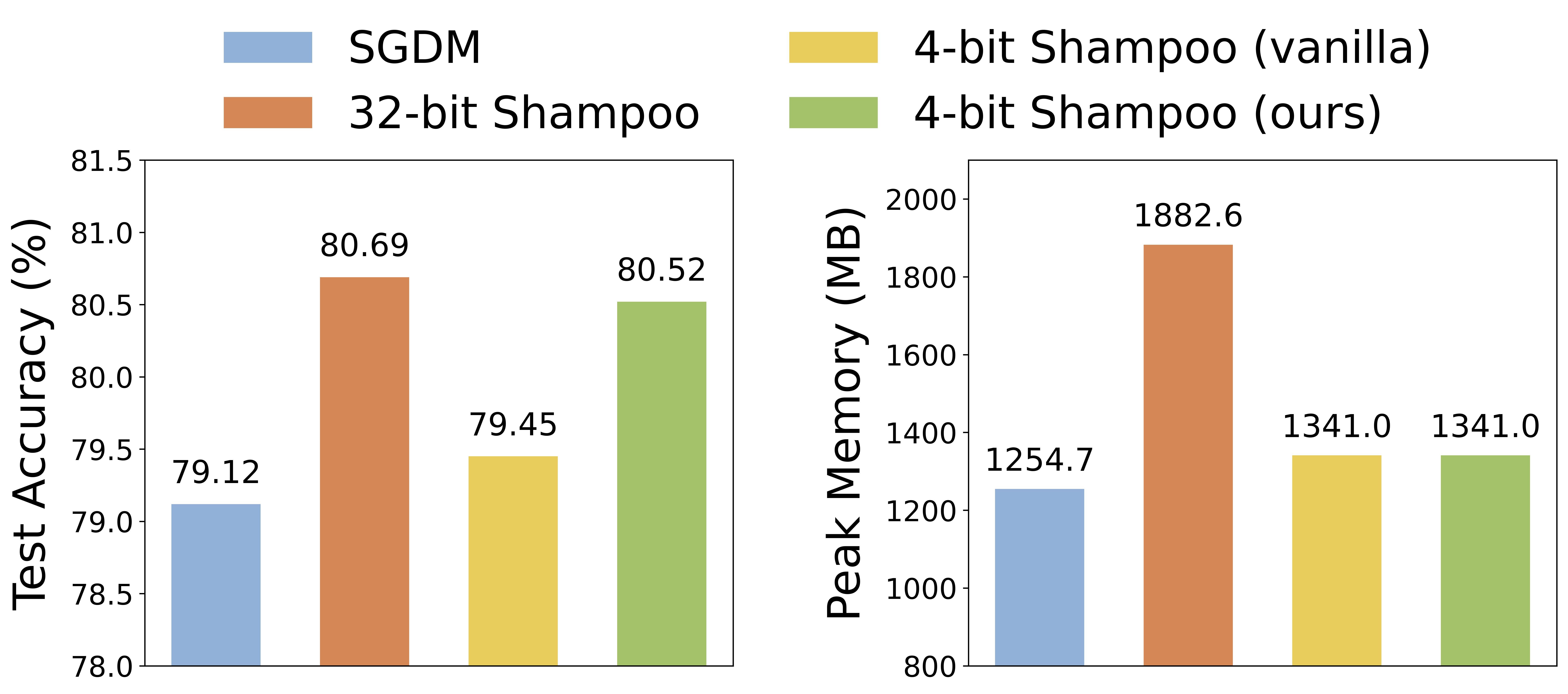}
    \vspace{-1.8em}
	\caption{Comparison of test accuracy and peak memory usage for training ResNet-34 on CIFAR-100 dataset.}
	\label{fig:compare}
    \vspace{-0.6em}
\end{figure}
Building on adaptive gradient methods, full-matrix preconditioned gradient methods offer theoretically superior convergence by capturing richer correlations among parameters \cite{duchi2011adaptive}. Despite these theoretical advantages, however, the memory overhead associated with non-diagonal matrices poses a significant challenge for large-scale neural networks, which can contain millions of parameters \cite{he2016deep, dosovitskiy2020image, liu2021swin}. To address this, a range of efficient preconditioned gradient methods, such as K-FAC \cite{martens2015optimizing}, Shampoo \cite{gupta2018shampoo}, K-BFGS \cite{goldfarb2020practical}, and AdaBK \cite{yong2023general}, aim to make full-matrix preconditioning computationally feasible by approximating the full-matrix preconditioner, e.g., block-diagonal precondition matrix. These algorithms have shown faster convergence rates in practice when compared to both SGD and adaptive gradient methods \cite{anil2020scalable, yong2023general, shi2023distributed}.  

Nevertheless, these efficient preconditioned methods still impose substantial memory costs that restrict their scalability in practical and large-scale model applications. As shown in \cref{fig:compare}, the peak memory usage of methods like Shampoo remains significantly higher than SGD.  While quantizing precondition matrices in these preconditioned methods from high-precision to low-precision, e.g., 32-bit to 4-bit, effectively reduces memory usage, it also inevitably introduces information loss, which, in turn, severely degrades the performance of these preconditioned methods. This is validated by \cref{fig:compare}:  compared with 32-bit Shampoo, 4-bit Shampoo enjoys much less memory cost but suffers from much worse performance. Therefore, for these efficient preconditioned methods, carefully designed strategies are essential to effectively compress precondition matrices without compromising optimization quality.

\noindent{\textbf{Contribution.}} We focus on Shampoo due to its simplicity, effectiveness, and popularity, aiming to enable efficient 4-bit quantization of preconditioners while maintaining the stability and efficiency of preconditioned gradient methods. Our main contributions are as follows:

\begin{itemize}
    \item We introduce \textbf{Cholesky quantization} to improve memory efficiency and stability. Instead of directly quantizing preconditioners, we apply Cholesky decomposition and quantize the Cholesky factors. This reduces storage by half while better preserving spectral properties, mitigating quantization-induced information loss. To the best of our knowledge, this is the first quantization approach applied to Cholesky factors of preconditioners. 
    
    \item We propose an \textbf{error feedback strategy} for Cholesky quantization to further reduce quantization error. Inspired by low-precision communication in distributed training \cite{seide20141, tang20211, xie2024loco}, we maintain a 4-bit error state that exponentially-moving averages past quantization errors for stable error estimation. This state compensates the Cholesky factor at each iteration, reducing information loss. Moreover, the triangular structure of the Cholesky factor allows efficient joint storage with its error state.
    
    \item We establish \textbf{convergence guarantees} for quantized Shampoo in both smooth and nonsmooth stochastic nonconvex optimization. In the smooth case, our 4-bit Shampoo achieves the optimal $\mathcal{O}(\frac{1}{\sqrt{T}})$ convergence rate. In the nonsmooth case (e.g., ReLU-based networks \cite{he2016deep}), we provide the first proof of global convergence for preconditioned gradient descent, showing convergence to stationary points under mild conditions.

    \item We develop \textbf{4-bit Shampoo} using these techniques and evaluate it on image classification with convolutional neural networks (CNNs) and vision transformers (ViTs). It outperforms vanilla 4-bit Shampoo, and significantly reduces memory usage compared to 32-bit Shampoo while maintaining comparable test performance, enabling larger models to be trained within existing resource constraints.
\end{itemize}

\vspace{-0.5em}
\section{Related Work}
\vspace{-0.5em}
\noindent \textbf{Preconditioned Stochastic Optimization.}
Adaptive gradient methods are the most widely used preconditioned gradient methods in training neural networks, with Adagrad \cite{duchi2011adaptive}, RMSProp \cite{tieleman2012lecture}, and Adam \cite{kingma2014adam} being notable examples. They use diagonal preconditioners to rescale the gradients, been shown to improve convergence in stochastic settings. Preconditioned gradient methods with non-diagonal preconditioners offer faster convergence in theory \cite{duchi2011adaptive}, and are widely explored recently due to faster convergence than adaptive gradient methods in practice \cite{martens2015optimizing,gupta2018shampoo,goldfarb2020practical,yong2023general}. Among them, Shampoo \cite{gupta2018shampoo} receives extensive concern for its simplicity and effectiveness \cite{morwani2024new,vyas2024soap,wang20244}, and it has been developed for large-scale distributed training \cite{anil2020scalable,shi2023distributed}.

\noindent \textbf{Quantization for Optimizers.} Quantization has been widely used for gradient compression to enable efficient communication in large-scale optimization, particularly for distributed training \cite{alistarh2017qsgd,wen2017terngrad,vogels2019powersgd}. Recent works have extended quantization to optimizer states—such as the momentum or second-moment estimates used by adaptive optimizers like Adam—to decrease peak memory usage during neural network training \cite{dettmers20218,li2024memory}. Despite its computational efficiency, quantization incurs information loss, which can degrade algorithmic performance. To address this, ongoing research explores techniques such as error feedback compensation to mitigate these effects and improve robustness \cite{seide20141,richtarik2021ef21}.

\vspace{-0.5em}
\section{Preliminaries}
\vspace{-0.7em}
Here we introduce practical Shampoo from~\cite{anil2020scalable}, and linear-square (linear-2) quantization~\cite{dettmers20218} to compress the preconditioning matrices in our algorithm.

\noindent \textbf{Notations.} Let \(\|A\|_F = \sqrt{\sum_{ij} A_{ij}^2}\) denote the Frobenius norm of a matrix \(A\), and \(\langle A, B \rangle = \sum_{ij} A_{ij}B_{ij}\) its inner product. The Kronecker product of \(A\) and \(B\) is denoted by \(A \otimes B\). For a symmetric matrix \(H\), \(\lambda_{\max}(H)\) and \(\lambda_{\min}(H)\) represent its maximum and minimum eigenvalues, respectively. For square symmetric matrices \(A\) and \(B\), we write \(A \preceq B\) if \(B - A\) is positive semidefinite (PSD). Quantization and dequantization operations denoted by \(\mathcal{Q}\) and \(\mathcal{D}\).

\vspace{-0.3em}
\subsection{Practical Shampoo} \label{sec-shampoo}
\vspace{-0.5em}
When minimizing a nonconvex stochastic objective:
\begin{equation}\label{eq:loss}
\setlength{\abovedisplayskip}{1pt}
\setlength{\belowdisplayskip}{1pt}
\setlength{\abovedisplayshortskip}{1pt}
\setlength{\belowdisplayshortskip}{1pt}
F(W)\coloneqq \mathbb{E}_{\xi \sim \Xi} [F(W, \xi)],
\end{equation}
where \(W\in \mathbb{R}^{m \times n}\) is the parameter of the learning model, and data \(\xi\) is drawn from an unknown distribution \(\Xi\). At each iteration, we sample a mini-batch of data points to compute the stochastic gradient \(G \in \mathbb{R}^{m \times n}\), and use this stochastic gradient to update the model parameter \(W\).

To accelerate convergence, Shampoo preconditions the stochastic gradient used in first-order optimizers. Specifically, at iteration \(k\), it updates the preconditioning states \(L_k\) and \(R_k\) with stochastic gradient \(G_k\) for preconditioning:
\begin{equation}\label{shampoogradient}
\setlength{\abovedisplayskip}{1pt}
\setlength{\belowdisplayskip}{1pt}
\setlength{\abovedisplayshortskip}{1pt}
\setlength{\belowdisplayshortskip}{1pt}
\begin{cases}
	L_k = \beta L_{k-1} + (1-\beta) G_k G_k^T,\\
	R_k = \beta R_{k-1} + (1-\beta) G_k^T G_k, \\
	\hat{G}_k = L_k^{-1/4} G_k R_k^{-1/4},
    % \\ \Tilde{G}_k = (\|G_k\|_F / \|\hat{G}_k\|_F) \cdot \hat{G}_k,
\end{cases}
\end{equation}
where \(\beta \in (0,1)\), and the \(1/4\)-th root inverse is computed efficiently using the Schur-Newton algorithm \cite{guo2006schur}. 

Next, first-order base optimizer \(\calF\) like SGD can use the preconditioned gradient \(\Tilde{G}_k\) in Eq.~\eqref{shampoogradient} to replace vanilla  \(G_k\) for model update. For efficiency, Shampoo stores \((L_k,R_k,L_k^{-1/4},R_k^{-1/4})\), and  updates \((L_k,R_k)\) for every \(T_1\) iterations and \((L_k^{-1/4},R_k^{-1/4})\) every \(T_2\) iterations. See practical Shampoo algorithm in \cref{alg:32bit} of \cref{app-32bit}.
\vspace{-0.1em}
\subsection{Linear Square Quantization for Compression} \label{sec-linear-2}
\vspace{-0.5em} 
Quantization compresses tensors from high precision floating-point to low precision, reducing memory usage. Following \cite{dettmers20218,li2024memory}, we use block-wise quantization to mitigate outlier effects. Below, we introduce the quantization and dequantization processes, focusing on the two-dimensional tensor (matrix) case of Shampoo.

\noindent\textbf{Quantization.}  For a floating-point matrix \(X \in \mathbb{R}^{m \times n}\), we partition it into blocks of size \(B\times B\), resulting in \(P = \lceil m/B \rceil \times \lceil n/B \rceil\) blocks \(\{X_p\}_{p=1}^P\). In each block \(X_p\), a normalization factor \(N_p = \max \{|X_p|\}\) scales elements to \([-1, 1]\) via \(\Bar{X}_p = X_p / N_p\). Each element \(\bar{x}_p\) in \(\Bar{X}_p\) is then quantized to a \(b\)-bit integer using a quantization mapping \(\calM: [0, 2^b-1] \cap \mathbb{Z} \to [-1, 1]\), calculated by:
\begin{equation}
\setlength{\abovedisplayskip}{1pt}
\setlength{\belowdisplayskip}{1pt}
\setlength{\abovedisplayshortskip}{1pt}
\setlength{\belowdisplayshortskip}{1pt}
q_p = \argmin_{j \in [0,2^b - 1] \cap \mathbb{Z}} |\bar{x}_p - \calM (j)|.
\end{equation}
Common quantization mappings include linear, dynamic, and quantile mappings  \cite{dettmers20218,li2024memory,wang20244}. Here we  use a linear-2 mapping for simplicity and efficiency when \(b=4\):
\begin{equation}
\setlength{\abovedisplayskip}{1pt}
\setlength{\belowdisplayskip}{1pt}
\setlength{\abovedisplayshortskip}{1pt}
\setlength{\belowdisplayshortskip}{1pt}
\calM(j) = 
\begin{cases} 
	- ( -1 + \frac{2j}{2^b - 1} )^2, & j < 2^{b-1} - 1, \\
	0, & j = 2^{b-1} - 1, \\
	( -1 + \frac{2j}{2^b - 1} )^2, & j > 2^{b-1} - 1,
\end{cases}
\end{equation}
where \(j \!\in\! \{ 0, 1, \ldots, 2^b - 1 \}\). This block-wise quantization can be efficiently executed in parallel on GPUs \cite{gholami2022survey,yao2022zeroquant}.

\noindent\textbf{Dequantization.}  Dequantization \(\calD\) reverses the quantization process. For each quantized block \(Q_p\), we map each element \(q_p\) back to \([-1, 1]\) via \(\bar{x}'_p = \calM(q_p)\) to obtain \(\Bar{X}'_p\). We then restore the original scale using \(N_p\), giving \(X'_p = \calD(Q_p) = N_p \Bar{X}'_p\). Like quantization, dequantization is parallelizable on GPUs.  

For block size \(B\times B \), it balances accuracy and memory cost: smaller blocks improve accuracy but increase the number of normalization factors, raising memory overhead.

\vspace{-0.1em}
\section{Memory-Efficient  Shampoo Via Compensated Cholesky Quantization}
\vspace{-0.3em}
We first present a direct quantization method to reduce the memory overhead of Shampoo's preconditioning matrices in Sec.~\ref{sec-vanilla}. Then, in Sec.~\ref{sec-cholesky}, we introduce a more memory-efficient Cholesky quantization approach that better preserves spectral properties to enhance vanilla quantization. Finally, in Sec.~\ref{CholeskyCompensation}, we propose a compensation strategy to mitigate information loss from Cholesky quantization.

\vspace{-0.1em}
\subsection{Quantization for Shampoo Compression} \label{sec-vanilla}
\vspace{-0.3em}
From \cref{sec-shampoo}, one knows that Shampoo requires storage of four preconditioning matrices \((L_k, R_k, L_k^{-1/4}, R_k^{-1/4})\), each sized \(d \times d\),  where $d$ denotes the model parameter dimension. This brings much additional GPU memory cost, and becomes even more pronounced when training modern neural networks, which are often extremely high-dimensional. So  reducing Shampoo’s memory overhead is essential for efficient and scalable  network training. 

A straightforward approach is to use  a quantizer $\calQ$, e.g., the linear-2 quantization  in \cref{sec-linear-2},  to compress the preconditioners in Shampoo for saving memory,  and then adopt a dequantizer $\calD$ to recover  them for subsequent usage.  Formally, at iteration \(k\),  we can compute  two low-precision  preconditioning states \((\Bar{L}_k,\Bar{R}_k)\) as 
\begin{equation} 
\setlength{\abovedisplayskip}{4pt}
\setlength{\belowdisplayskip}{4pt}
\setlength{\abovedisplayshortskip}{4pt}
\setlength{\belowdisplayshortskip}{4pt}\label{eq:ori_q_state}
\begin{aligned}
& L_k = \beta \calD(\Bar{L}_{k-1}) + (1-\beta) G_k G_k^T, \ \Bar{L}_k = \calQ(L_k), \\
& R_k = \beta \calD(\Bar{R}_{k-1}) + (1-\beta) G_k^T G_k, \ \Bar{R}_k = \calQ(R_k). \\
\end{aligned}
\end{equation}
In this work, we use 4-bit precision  for efficient storage.   
For \(\Bar{L}_k^{-1/4}, \Bar{R}_k^{-1/4}\), we update them as 
\begin{equation} 
\setlength{\abovedisplayskip}{4pt}
\setlength{\belowdisplayskip}{4pt}
\setlength{\abovedisplayshortskip}{4pt}
\setlength{\belowdisplayshortskip}{4pt}
\label{eq:ori_q_prec}  
\begin{aligned} 
& L_k = \calD(\Bar{L}_k), \  \Bar{L}_k^{-1/4} = \calQ((L_k + \lambda_{\max}^L \epsilon I_m)^{-1/4}), \\
& R_k = \calD(\Bar{R}_k), \  \Bar{R}_k^{-1/4} = \calQ( (R_k + \lambda_{\max}^R \epsilon I_n)^{-1/4}), 
\end{aligned}
\end{equation}
where, same as vanilla  Shampoo,  \(\lambda_{\max}^L \epsilon I_m\) and \(\lambda_{\max}^R \epsilon I_n\) provide numerical stability during the Schur-Newton iterations used to calculate the inverse \(1/4\)-th roots, in which \(\lambda_{\max}^L\), \(\lambda_{\max}^R\) are the maximal singular values of \(L_k,R_k\), and \(\epsilon\) is a small constant \cite{yong2023general}.

Accordingly, one can store 4-bit \((\Bar{L}_k,\Bar{R}_k,\) \( \Bar{L}_k^{-1/4},\) \( \Bar{R}_k^{-1/4} )\) instead of their original 32-bit versions, and  dequantize them for usage, e.g.,  dequantizing \((\Bar{L}_k^{-1/4}\),  \( \Bar{R}_k^{-1/4})\) to compute preconditioned gradient in Eq.~\eqref{shampoogradient}. 

Despite its simplicity, direct quantization of preconditioners as in \cref{eq:ori_q_state} and \cref{eq:ori_q_prec} can lead to performance degradation due to information loss, e.g., quantizing them from 32-bit to 4-bit precision. For instance, when training ViT-Small \cite{dosovitskiy2020image} on CIFAR-100 \cite{krizhevsky2009learning} with Shampoo using AdamW as the base optimizer, the 32-bit version Shampoo achieves 77.95\% test accuracy, substantially outperforming the 4-bit quantized Shampoo, which reaches only 74.56\%. 
Further experimental comparisons can be found in \cref{sec-experiment}.

\vspace{-0.15em}
\subsection{Efficient and Stable Cholesky Quantization} \label{sec-cholesky}
\vspace{-0.35em}

Here we introduce Cholesky quantization (CQ) to further improve memory efficiency and also stability of quantization in Sec.~\ref{sec-vanilla}.  
Instead of quantizing \(L_k\) and \(R_k\), we  apply Cholesky decomposition on \(L_k\) and \(R_k\), and quantize their corresponding  Cholesky factors as \(\Bar{C}_k^L\) and \(\Bar{C}_k^R\) which are lower triangular matrices and require much less  storage. Formally, at iteration \(k\), this process can be written as
\begin{equation} 
\setlength{\abovedisplayskip}{4pt}
\setlength{\belowdisplayskip}{4pt}
\setlength{\abovedisplayshortskip}{4pt}
\setlength{\belowdisplayshortskip}{4pt}
\label{eq:cd_q_state}
\begin{aligned}
    & L_{k-1} \!=\! \calD(\Bar{C}_{k-1}^L)\calD(\Bar{C}_{k-1}^L)^T \!, R_{k-1} \!=\! \calD(\Bar{C}_{k-1}^R)\calD(\Bar{C}_{k-1}^R)^T, \\
    & L_k \!= \!\beta L_{k-1} \!+\! (1\!-\!\beta) G_k G_k^T,   R_k \!=\! \beta R_{k-1} \!+\! (1\!-\!\beta) G_k^T G_k , \\
    &C_{k}^L\!=\!\texttt{Cholesky}( L_k \!+\!  \epsilon I), C_{k}^R\!=\!\texttt{Cholesky}( R_k \!+\! \epsilon I),
\end{aligned}
\end{equation}
where \(\texttt{Cholesky}(L_k + \epsilon I)\) computes a lower triangular matrix \(C_k^L\) such that \(C_k^L {C_k^L}^T = L_k + \epsilon I\). The small term \(\epsilon I\) is added for numerical stability, with \(\epsilon\) as small constant. Once \(C_k^L\) and \(C_k^R\) are computed, they are quantized as: 
\begin{equation}
\setlength{\abovedisplayskip}{4pt}
\setlength{\belowdisplayskip}{4pt}
\setlength{\abovedisplayshortskip}{4pt}
\setlength{\belowdisplayshortskip}{4pt}
\Bar{C}_k^L = \calQ(C_{k}^L ), \qquad \ \Bar{C}_k^R = \calQ(C_{k}^R).
\end{equation}
Accordingly, we can only store two quantized lower triangular matrices $\Bar{C}_k^L$ and $ \Bar{C}_k^R.$ Here we quantize the off-diagonal part of $\Bar{C}_k^L$ and $ \Bar{C}_k^R$ into 4-bit precision while retaining the diagonal elements for 32-bit. This approach is used because off-diagonal elements have less impact on numerical stability, allowing reduced precision with minimal accuracy loss. In contrast, diagonal elements are crucial for overall stability and accuracy, so keeping them in 32-bit helps prevent error accumulation in the factorization.  

Now we discuss two advantages of Cholesky quantization. Firstly, Cholesky factors are lower triangular matrices, requiring nearly half the GPU memory compared to storing full preconditioners, reducing peak memory usage. Secondly, the preconditioner \(L_k\) recovered from \(L_{k} = \calD(\Bar{C}_{k}^L)\calD(\Bar{C}_{k}^L)^T\) remains symmetric and positive definite (PD), better preserving spectral properties. Consequently, its inverse \(1/4\)-th root more closely approximates the original 32-bit preconditioner. To quantify this preservation, we consider the Frobenius norm relative error (NRE) and angle error (AE) between matrices \cite{wang20244}, given by 
\begin{equation}
\setlength{\abovedisplayskip}{5pt}
\setlength{\belowdisplayskip}{5pt}
\setlength{\abovedisplayshortskip}{5pt}
\setlength{\belowdisplayshortskip}{5pt}
\begin{aligned}
\text{NRE} & = \| A^{-1/4} - g(A)^{-1/4} \|_F/ \| A^{-1/4} \|_F, \\
\text{AE} & = \arccos \left( \frac{\langle A^{-1/4}, g(A)^{-1/4} \rangle }{\| A^{-1/4} \|_F \| g(A)^{-1/4}\|_F} \right),
\end{aligned}
\end{equation}
where \( g \) represents the combined effect of quantization and dequantization.
We evaluate these metrics using both synthetic PD matrices and preconditioners from 32-bit Shampoo training of VGG-19 on CIFAR-100. As shown in \cref{tab:nre_ae}, Cholesky quantization significantly reduces both NRE and AE, demonstrating its effectiveness in preserving spectral properties. See \cref{app-mat} for further details.

\begin{table}[h]
\vspace{-0.4em}
    \centering
    \caption{NRE and AE on synthetic and real preconditioners for vanilla quantization (VQ) and Cholesky quantization (CQ).}
    \label{tab:nre_ae}
    \vspace{-0.7em}
    \begin{small}
    \renewcommand{\arraystretch}{0.85}
    \setlength{\aboverulesep}{1pt}
    \begin{tabular}{l|cc|cc}
        \toprule
        Quantization & \multicolumn{2}{c|}{VQ} & \multicolumn{2}{c}{CQ} \\
        & NRE & AE & NRE & AE \\
        \midrule
        Synthetic & 46.141 & 27.187 & 9.188  & 9.204  \\
        Epoch 50  & 29.041 & 19.353 & 5.367  & 5.366  \\
        Epoch 100 & 25.712 & 18.505 & 4.852  & 4.853  \\
        Epoch 150 & 25.351 & 19.317 & 4.788  & 4.788  \\
        Epoch 200 & 34.908 & 20.795 & 6.152  & 6.154  \\
        \bottomrule
    \end{tabular}
    \end{small}
\vspace{-0.4em}
\end{table}

Finally, we analyze the computational efficiency of Cholesky quantization. While Cholesky decomposition has a complexity of \(\mathcal{O}(n_p^3)\) for a matrix of dimension \(n_p\), it is applied layer-wise with a capped preconditioner order of 1200 (\cref{app-exp_hyp}), keeping the cost manageable. Additionally, the matrix multiplications for computing the inverse \(1/4\)-th root and gradient preconditioning also have a complexity of \(\mathcal{O}(n_p^3)\). Experimental results in \cref{tab:imagenet,tab:llm} confirm that CQ incurs minimal computational overhead.

\subsection{Compensated Cholesky Quantization}\label{CholeskyCompensation}
\vspace{-0.5em}

\begin{figure}[t]
	\centering
	\includegraphics[width=0.43\linewidth]{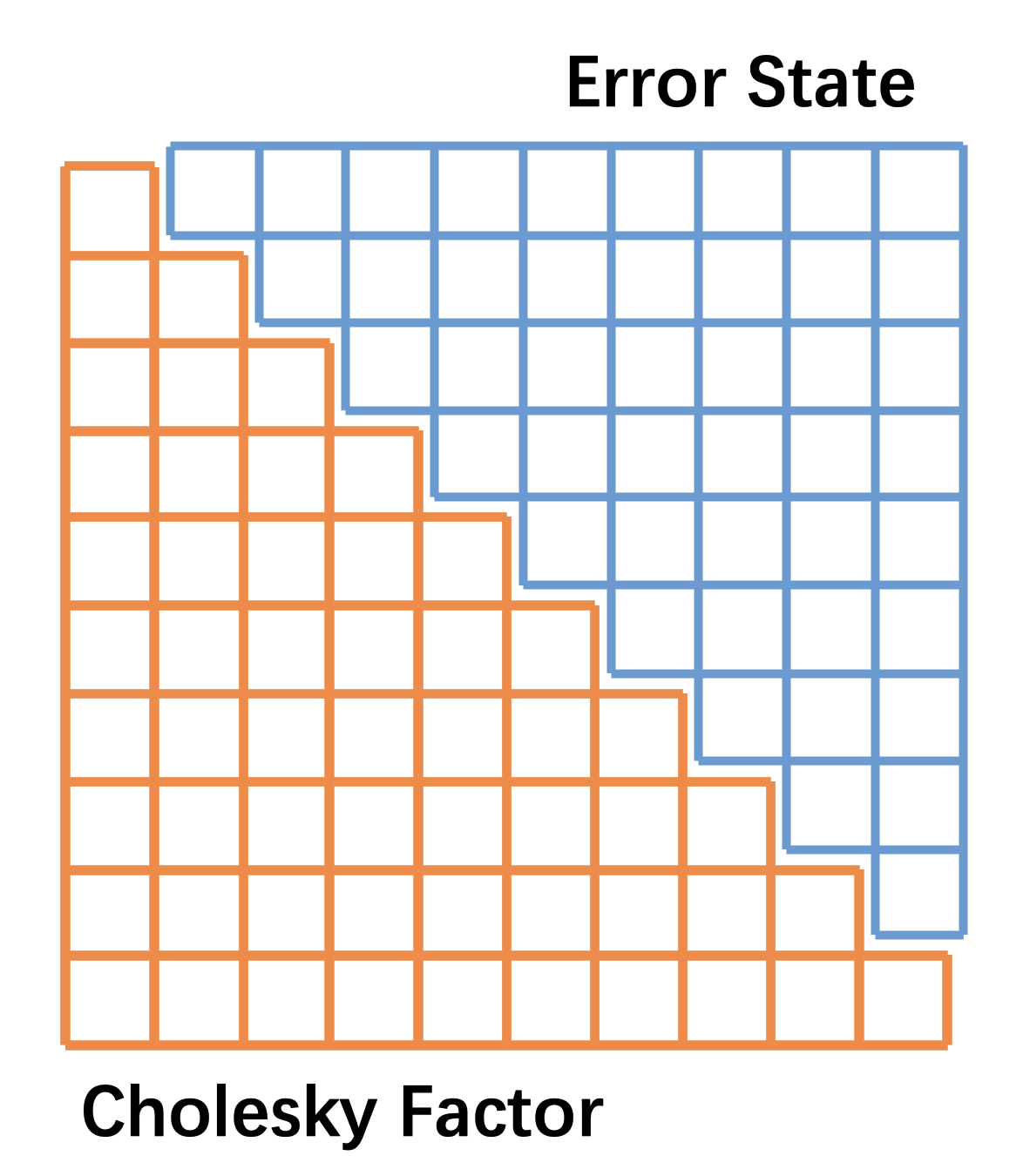}
        \vspace{-0.2em}
	\caption{Efficient storage for Cholesky factor and error state.}
    \vspace{-0.5em}
	\label{fig:ef}
\end{figure}

To mitigate the information loss from quantization, we introduce error feedback (EF) for Cholesky factors. Error feedback was original proposed to alleviate the information loss caused by gradient compression for communication in distributed training setting \cite{seide20141,richtarik2021ef21}. The key idea is to compensate for compression errors by adding them back into the gradients before compression in the next step. Practical adaptations of EF has also been explored in \cite{tang20211,xie2024loco} to combine EF with adaptive gradient methods for communication-efficient large-scale training. 

Different from previous work, our focus in this work is the compression of preconditioners of preconditioned gradient methods, and therefore our error feedback is conducted on the preconditioners. At each iteration, an additional low-precision (4-bit) error state, denoted as \(\Bar{E}_k^L\), is maintained to capture quantization error for the Cholesky factor \(\Bar{C}_k^L\). This error state is then used in the next iteration to enhance precision by compensating for potential quantization errors.

Specifically, at iteration \(k\), we first compute the Cholesky factors \(C_k^L\) and \(C_k^R\) following the standard steps in \cref{eq:cd_q_state}. Before quantizing, we apply error compensation as follows:
\begin{equation} 
\setlength{\abovedisplayskip}{5pt}
\setlength{\belowdisplayskip}{5pt}
\setlength{\abovedisplayshortskip}{5pt}
\setlength{\belowdisplayshortskip}{5pt}
\label{eq:cd_ef_state}
\begin{aligned}
& E_{k-1}^L = \calD(\Bar{E}_{k-1}^L),\qquad  \Bar{C}_k^L = \calQ(C_{k}^L + E_{k-1}^L), \\
& E_{k-1}^R = \calD(\Bar{E}_{k-1}^R),\qquad  \Bar{C}_k^R = \calQ(C_{k}^R + E_{k-1}^R).
\end{aligned}
\end{equation}
Next, we update the error states \(\Bar{E}_k^L\) and \(\Bar{E}_k^R\) using an exponential moving average to improve stability:
\begin{equation}
\setlength{\abovedisplayskip}{5pt}
\setlength{\belowdisplayskip}{5pt}
\setlength{\abovedisplayshortskip}{5pt}
\setlength{\belowdisplayshortskip}{5pt}
\label{eq:cd_ef_error}
\begin{aligned}
& E_k^L = \beta_e E_{k-1}^L + (1-\beta_e) (C_{k}^L + E_{k-1}^L - \calD(\Bar{C}_k^L)), \\
& E_k^R = \beta_e E_{k-1}^R + (1-\beta_e) (C_{k}^R + E_{k-1}^R - \calD(\Bar{C}_k^R)),
\end{aligned}
\end{equation}
where \(\beta_e\) is the momentum parameter. Since the Cholesky factors \(C_k^L\) and \(C_k^R\) are lower triangular and quantization excludes diagonal elements, the error states \(E_k^L\) and \(E_k^R\) are also triangular with zero diagonals. This enables efficient storage, as each error state can be stored as the upper triangular part as illustrated in \cref{fig:ef}, incurring no additional memory overhead compared to vanilla 4-bit Shampoo.

Finally, we can compute the inverse $1/4$-th root of the preconditioners with stored Cholesky factors via
\begin{equation}
\setlength{\abovedisplayskip}{3pt}
\setlength{\belowdisplayskip}{3pt}
\setlength{\abovedisplayshortskip}{3pt}
\setlength{\belowdisplayshortskip}{3pt}
\label{eq:cd_root}
\begin{aligned}
& \hat{L}_k = \calQ( (\calD(\Bar{C}_k^L) \calD(\Bar{C}_k^L)^T + \lambda_{\max}^L \epsilon I_m)^{-1/4} ), \\
& \hat{R}_k = \calQ( (\calD(\Bar{C}_k^R) \calD(\Bar{C}_k^R)^T + \lambda_{\max}^R \epsilon I_n)^{-1/4} ).
\end{aligned}
\end{equation}

Next, with SGD as the base optimizer, the model parameters are updated with the preconditioned gradient: 
\begin{equation}
\setlength{\abovedisplayskip}{3pt}
\setlength{\belowdisplayskip}{3pt}
\setlength{\abovedisplayshortskip}{3pt}
\setlength{\belowdisplayshortskip}{3pt}
\label{SGD}
W_{k+1} = W_k - \eta_k \calD (\hat{L}_k) G_k \calD(\hat{R}_k),
\end{equation}
where \(\eta_k\) is the learning rate for iteration \(k\) that is often scaled by \(\|G_k\|_F / \|\hat{G}_k\|_F\) according to the grafting trick \cite{agarwal2020disentangling}. The preconditioned stochastic gradient \(\calD(\hat{L}_k) G_k \calD(\hat{R}_k)\) can also be fed into another first-order optimizer $ \mathcal{F}$, such as Adam, for model updates.  Accordingly, we have arrived at our compensated Cholesky quantization based Shampoo summarized in \cref{alg:4bit}.

\begin{algorithm}[t]
\caption{4-bit Shampoo via Compensated Cholesky Quantization} \label{alg:4bit}
\textbf{Input:} initial weight $W_0 \in \mathbb{R}^{m \times n}$, initial Cholesky factors $\Bar{C}_0^L = \sqrt{\epsilon} I_m$, $\Bar{C}_0^R = \sqrt{\epsilon} I_n$, quantization error states $\Bar{E}_0^L = \bm{0}$, $\Bar{E}_0^R = \bm{0}$, initial preconditioners $\hat{L}_0 = I_m$, $\hat{R}_0 = I_n$. Total training iterations $T$, interval of updating preconditioners $T_1$ and $T_2$, momentum parameter $\beta,\beta_e \in (0,1)$. First-order optimizer $\mathcal{F}$ with initial optimizer state $s_0$.\\
\textbf{Output:} final weight $W_T$.
\begin{algorithmic}[1]
\For{$k = 1, 2, \ldots, T$}
    \State Compute gradient $G_k = \nabla \mathcal{L}_k(W_k)$
    \If{$k \% T_1 \equiv 0$}
        \State Update Cholesky factors according to \cref{eq:cd_q_state}
        \State Conduct error compensation following \cref{eq:cd_ef_state}
        \State Update quantization error states as \cref{eq:cd_ef_error}
    \Else
        \State $\Bar{C}_k^L = \Bar{C}_{k-1}^L, \ \Bar{C}_k^R = \Bar{C}_{k-1}^R$
    \EndIf
    \If{$k \% T_2 \equiv 0$}
        \State Compute inverse $1/4$-th root of the preconditioners following \cref{eq:cd_root} 
        % via Schur-Newton iterations
    \Else
        \State $\hat{L}_k = \hat{L}_{k-1}$, \  $\hat{R}_k = \hat{R}_{k-1}$
    \EndIf
    \State$\hat{G}_k =  \calD(\hat{L}_k) G_k \calD(\hat{R}_k)$
    % \State $\tilde{G}_k = (\|G_k\|_F / \|\hat{G}_k\|_F) \cdot \hat{G}_k $
    \State $W_k, s_k = \mathcal{F}(W_{k-1}, s_{k-1}, \hat{G}_k)$
\EndFor
\end{algorithmic}
\end{algorithm}
\setlength{\textfloatsep}{18pt}

\vspace{-0.8em}
\section{Theoretical Analysis}
\vspace{-0.6em}
Here we provide theoretical analysis of \cref{alg:4bit} with SGD base optimizer as an example. We first define
\begin{equation}
\setlength{\abovedisplayskip}{3pt}
\setlength{\belowdisplayskip}{3pt}
\setlength{\abovedisplayshortskip}{3pt}
\setlength{\belowdisplayshortskip}{3pt}
\begin{aligned}
 & x_k := \vect(W_k),\; g_k := \vect(G_k), \\
 & H_k := \calD(\hat{R}_k) \otimes \calD(\hat{L}_k),
 \end{aligned}
\end{equation}
where $\vect$ reshapes the matrix into a vector by concatenating the columns of the matrix.   
Then, we rewrite Shampoo with SGD as base optimizer in Eq.~\eqref{SGD} into an equivalent  vectorization formulation: 
\begin{equation}
\setlength{\abovedisplayskip}{2pt}
\setlength{\belowdisplayskip}{2pt}
\setlength{\abovedisplayshortskip}{2pt}
\setlength{\belowdisplayshortskip}{2pt}
\label{generalSGD}
	x_{k+1}=x_k-\eta_kH_kg_k.
\end{equation}
See this equivalent derivation in \cref{app-proof}. In the following, we analyze Shampoo with SGD as base optimizer in Eq.~\eqref{generalSGD} under different situations. 

\vspace{-0.1em}
\subsection{Smooth Nonconvex Training Loss}
\vspace{-0.3em}
Here we analyze the smooth nonconvex  $f$, which is defined according to loss function \cref{eq:loss} as
\begin{equation}
\setlength{\abovedisplayskip}{2pt}
\setlength{\belowdisplayskip}{2pt}
\setlength{\abovedisplayshortskip}{2pt}
\setlength{\belowdisplayshortskip}{2pt}
f(x) \coloneqq F(W) 
%=\mathbb{E}_{\xi \sim \Xi} [F(W, \xi)]
,
\end{equation}
where \(x=\vect(W)\) is the vectorized model parameter. To this end, we introduce the necessary assumptions.
\vspace{-0.5em}
\begin{assumption} 
\label{assumption:func_noise}
\textbf{a)}  Assume the training loss $f$ is $L$-Lipschitz smooth, i.e.,  $\norm{\nabla f(x)-\nabla f(y)}_2\leq L\norm{x-y}_2$.  \\
\textbf{b)}  Suppose the stochastic gradient $g_k$ is unbiased and its variance can be bounded:
$\E[g_k]=\nabla f(x_k)$ and $\E[\norm{g_k-\nabla f(x_k)}_2^2]\leq \sigma^2(1+\norm{\nabla f(x_k)}_2^2)$. \\ 
\textbf{c)}  Assume the preconditioner $H_k$ has bounded eigenvalues, i.e., $\sup_k\lambda_{\max}(H_k)\leq\lambda_{H,\max}<\infty$ and $\inf_k\lambda_{\min}(H_k)\geq\lambda_{H,\min}>0.$\\
\end{assumption}
\vspace{-1.5em}
For Assumptions~\ref{assumption:func_noise}a) and b), these conditions are standard for stochastic first-order methods (in fact, Assumption~\ref{assumption:func_noise}b) is even milder than the commonly assumed condition \(\E[\|g_k-\nabla f(x_k)\|_2^2]\leq \sigma^2\)). Assumption~\ref{assumption:func_noise}c) requires the preconditioner \(H_k\) to be upper bounded and positive definite, which is guaranteed by the implementation of the Schur--Newton method, the regularization step in \cref{eq:cd_q_state}, and Proposition~\ref{prop:precond_bound}. Specifically, (i) the upper bound follows from \cref{eq:cd_q_state} where an \(\epsilon I\) regularization is added to ensure a lower bound on \(C^L_k\); then, applying the \(-\tfrac{1}{4}\) exponent yields an upper bound while the operator \(\mathcal{D}\!\mathcal{Q}\) keeps the quantization error bounded. (ii) The strict positive definiteness (i.e., the lower bound) is ensured by the Gershgorin Circle Theorem and the Schur--Newton method. In particular, if \(S_k\) denotes the inner matrix in \cref{eq:cd_root} such that \(\mathcal{D}(\hat{L}_k)=\mathcal{D}\!\mathcal{Q}\bigl(S_k^{-1/4}\bigr)\), then the Schur--Newton method (applied for a limited number of steps) yields a diagonally dominant matrix \(Z_k\) approximating \(S_k^{-1/4}\); writing \(\mathcal{D}(\hat{L}_k)=Z_k+E^Z_k\) with \(E^Z_k\) denoting the quantization error, the Gershgorin Circle Theorem guarantees that if \(Z_k\) is strictly diagonally dominant and \(\|E^Z_k\|\) is sufficiently small, then \(Z_k+E^Z_k\) remains strictly positive definite, as further supported by Proposition~\ref{prop:precond_bound}. Empirical evidence in \cref{fig:eign} also demonstrates that the eigenvalues of the dequantized preconditioners \(\calD(\hat{L}_k)\) and \(\calD(\hat{R}_k)\) remain positive throughout training, further validating Assumption~\ref{assumption:func_noise}c).

\begin{figure*}[ht]
	\begin{center}
		\setlength{\tabcolsep}{0.0pt}  
		\scalebox{1.04}{\begin{tabular}{cccc} 
            \includegraphics[width=0.24\linewidth]{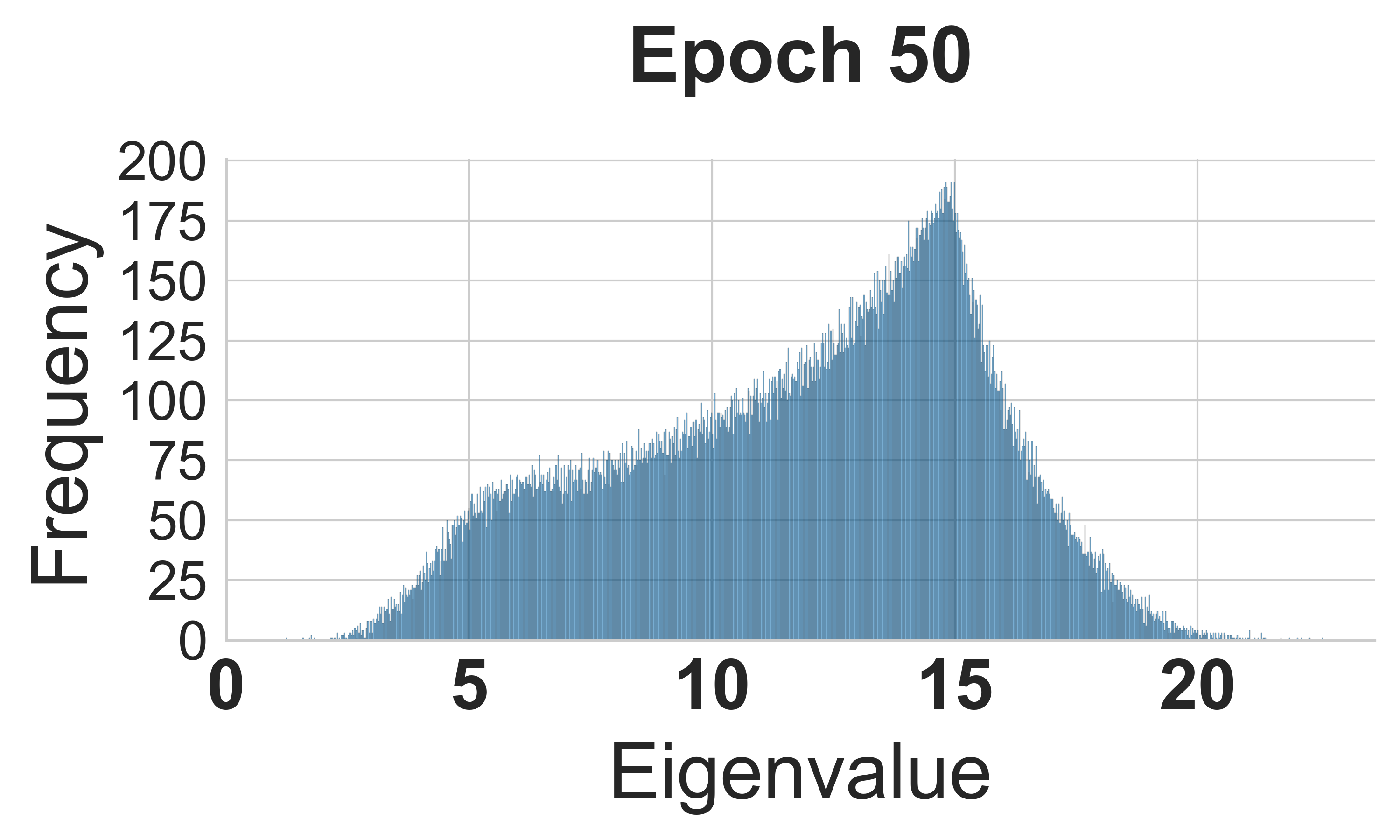}&
            \includegraphics[width=0.24\linewidth]{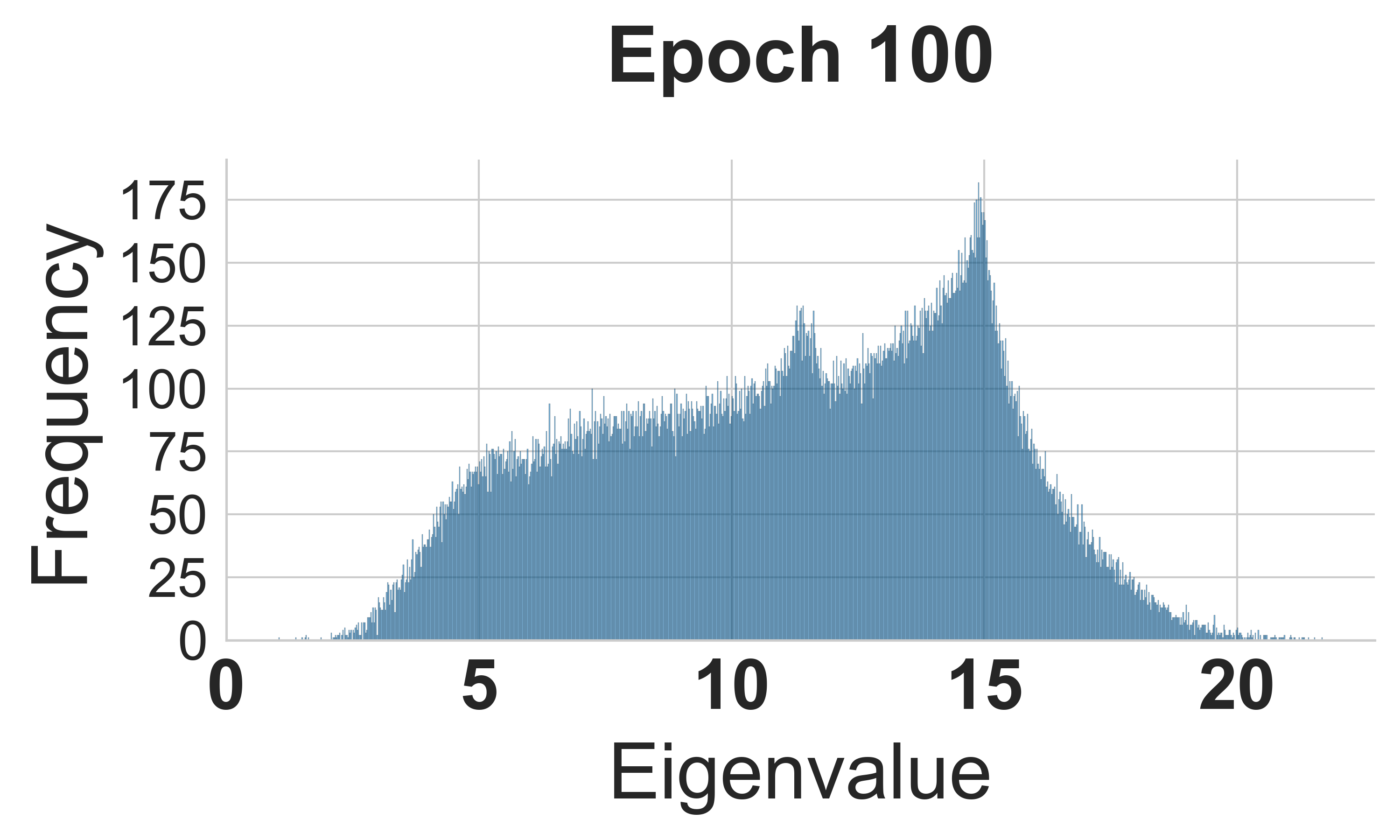}&
            \includegraphics[width=0.24\linewidth]{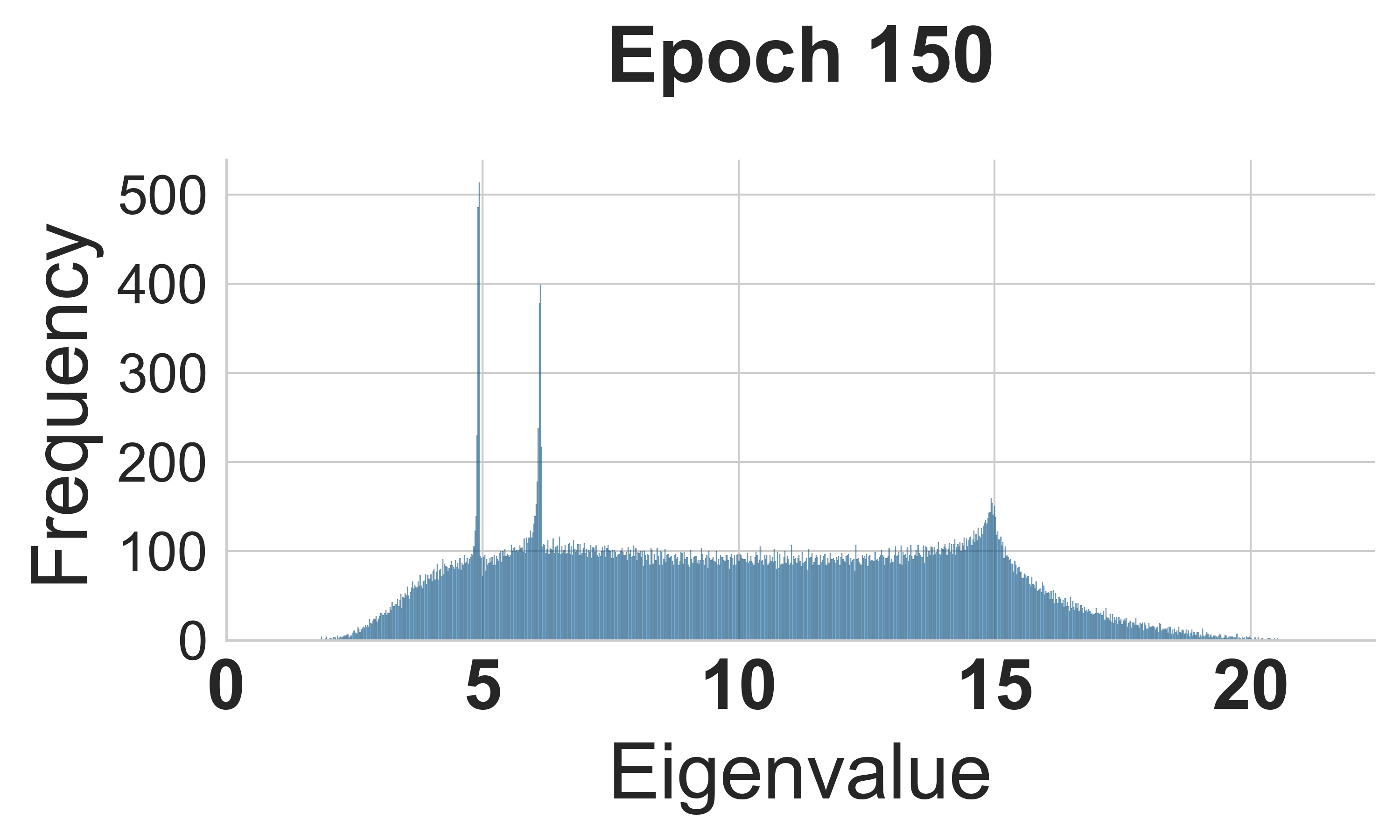}& 
            \includegraphics[width=0.24\linewidth]{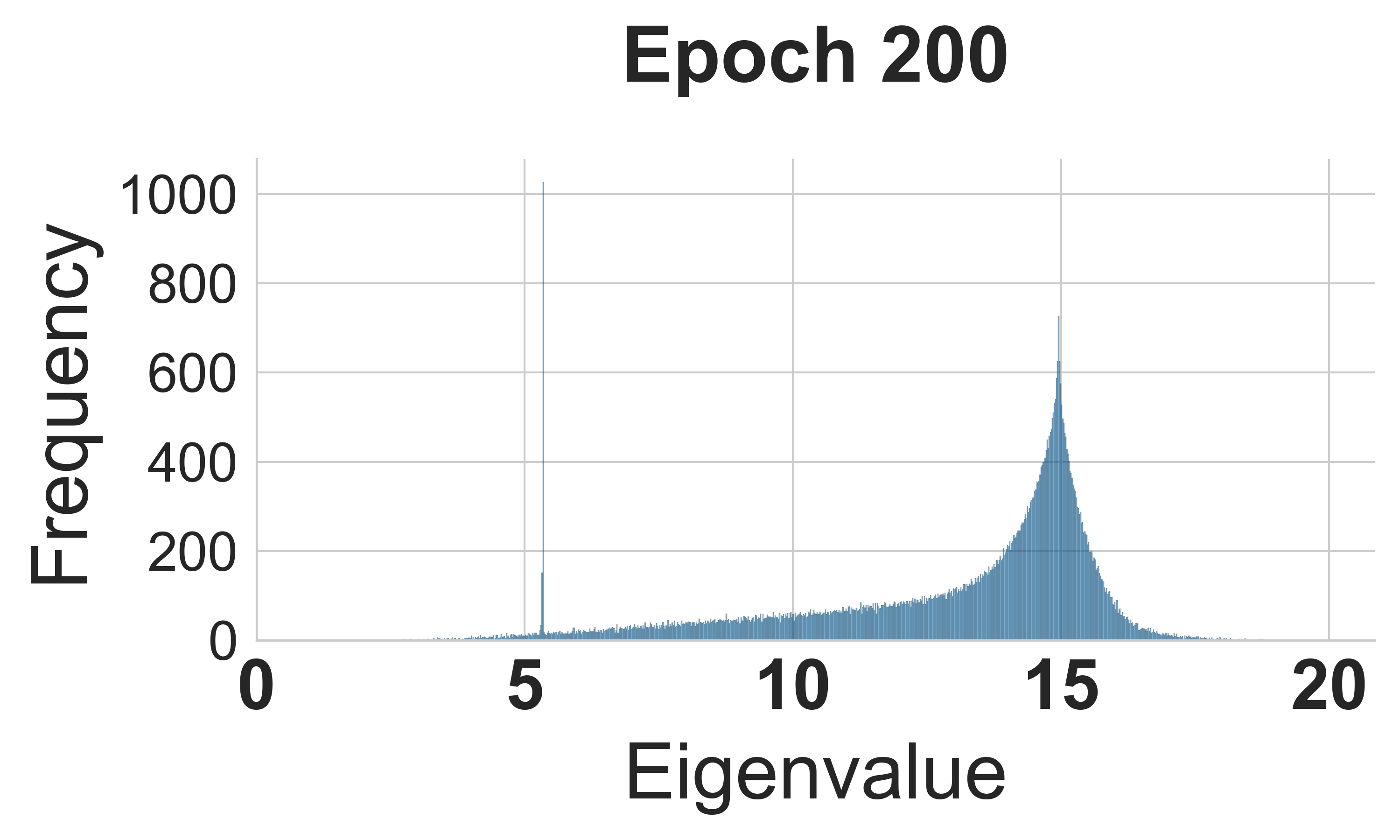}
		\end{tabular}}
	\end{center}
	\vspace{-2.3em}
	\caption{Eigenvalue frequency of the dequantized preconditioners $\calD(\hat{L})$ and $\calD(\hat{R})$ of VGG-19 on CIFAR-100 at 50, 100, 150, and 200 training epochs, all eigenvalues are greater than 0.}
\label{fig:eign}
\vspace{-1em}
\end{figure*}

\vspace{-0.5em}
\begin{proposition}
\label{prop:precond_bound}

For the 4-bit Shampoo in \cref{alg:4bit}, let $M_k:= (\calD(\Bar{C}_k^L) \calD(\Bar{C}_k^L)^T + \lambda_{\max}^L \epsilon I_m)^{-1/4}$, if $\norm{M_k}_{{\rm off},\max}\leq C_B$, then its preconditioners hold that
\[
\setlength{\abovedisplayskip}{5pt}
\setlength{\belowdisplayskip}{5pt}
\setlength{\abovedisplayshortskip}{5pt}
\setlength{\belowdisplayshortskip}{5pt}
\calD(\hat L_k)\preceq  M_k+C_B n_k 2^{-b} I,
\]
where $\norm{\cdot}_{{\rm off},\max}$ is the maximal absolute value of all off-diagonal entries and $n_k$ is the number of rows in $W_k$.  Furthermore, if for every row index \(i\) it holds that $|[M_k]_{ii}|>\qty(1+\frac{2}{2^b-1})\sum_{j\neq i}|[M_k]_{ij}|$, then $\calD(\hat L_k)\succ0$.
\end{proposition}
\vspace{-0.5em}
This proposition shows that the sequence  $\{\calD(\hat L_k)\}$ can be bounded above and below. Now we are ready to derive the convergence, and state the main results below. 

\vspace{-0.5em}
\begin{theorem}
\label{thm:general_psgd}
Suppose Assumption \ref{assumption:func_noise} holds. Let $\eta_k=\frac{c}{\sqrt{T+1}}$ with $c\in\left(0,\frac{\lambda_{H,\min}}{2L(1+\sigma^2)\lambda_{H,\max}^2}\right)$, then we have 
\[
\setlength{\abovedisplayskip}{3pt}
\setlength{\belowdisplayskip}{3pt}
\setlength{\abovedisplayshortskip}{3pt}
\setlength{\belowdisplayshortskip}{3pt}
\E\left[\norm{\nabla f(\bar x_T)}_2^2\right]\leq\frac{2(f(x_0)-\bar f+c^2L\sigma^2\lambda_{H,\max}^2)}{c\lambda_{H,\min}\sqrt{T+1}},
\]
where $\bar x_T$ is randomly selected from $\{x_0,x_1,...,x_T\}$ and $\bar f:=\min_{x\in\R^d}\;f(x)$.
\end{theorem}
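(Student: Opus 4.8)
The plan is to run the standard descent-lemma argument for preconditioned SGD, with the two ingredients that distinguish this setting: the two-sided spectral control on $H_k$ (Assumption~\ref{assumption:func_noise}c)) and the \emph{relaxed} variance bound (Assumption~\ref{assumption:func_noise}b)) whose noise variance carries an extra $\norm{\nabla f(x_k)}_2^2$ term.

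First I would apply $L$-smoothness to $x_{k+1}=x_k-\eta_kH_kg_k$ to obtain the one-step inequality $f(x_{k+1})\leq f(x_k)-\eta_k\langle\nabla f(x_k),H_kg_k\rangle+\tfrac{L\eta_k^2}{2}\norm{H_kg_k}_2^2$, and then take the expectation conditioned on the history $\calF_k$ so that $x_k$ and $H_k$ are known and $g_k$ is the only randomness. Unbiasedness makes the linear term $\langle\nabla f(x_k),H_k\nabla f(x_k)\rangle\geq\lambda_{H,\min}\norm{\nabla f(x_k)}_2^2$; for the quadratic term I would expand $g_k=\nabla f(x_k)+(g_k-\nabla f(x_k))$, cancel the cross term by unbiasedness, use $\norm{H_k}\leq\lambda_{H,\max}$, and invoke Assumption~\ref{assumption:func_noise}b) to bound it by $\lambda_{H,\max}^2(1+\sigma^2)\norm{\nabla f(x_k)}_2^2+\lambda_{H,\max}^2\sigma^2$. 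This yields $\E[f(x_{k+1})\mid\calF_k]\leq f(x_k)-\bigl(\eta_k\lambda_{H,\min}-\tfrac{L\eta_k^2}{2}\lambda_{H,\max}^2(1+\sigma^2)\bigr)\norm{\nabla f(x_k)}_2^2+\tfrac{L\eta_k^2}{2}\lambda_{H,\max}^2\sigma^2$.

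Because $\eta_k=c/\sqrt{T+1}\leq c<\frac{\lambda_{H,\min}}{2L(1+\sigma^2)\lambda_{H,\max}^2}$, the coefficient of $\norm{\nabla f(x_k)}_2^2$ is at least $\tfrac12\eta_k\lambda_{H,\min}>0$ — this is exactly what is needed to absorb the gradient-dependent part of the noise, and it is where the $(1+\sigma^2)$ factor in the admissible range of $c$ enters. Taking total expectation, rearranging, summing over $k=0,\dots,T$ (the step size is constant, so the function-value differences telescope), bounding $\E[f(x_{T+1})]\geq\bar f$, dividing by $\tfrac12\eta\lambda_{H,\min}(T+1)$, and substituting $\eta=c/\sqrt{T+1}$ gives $\frac{1}{T+1}\sum_{k=0}^T\E[\norm{\nabla f(x_k)}_2^2]\leq\frac{2(f(x_0)-\bar f)+Lc^2\lambda_{H,\max}^2\sigma^2}{c\lambda_{H,\min}\sqrt{T+1}}$, which is the claimed bound (up to minor slack in the constants) once one notes that drawing $\bar x_T$ uniformly from $\{x_0,\dots,x_T\}$ makes the left side equal $\E[\norm{\nabla f(\bar x_T)}_2^2]$.

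The main obstacle is the conditioning step: in \cref{alg:4bit} the preconditioner $\hat L_k,\hat R_k$ (hence $H_k$) is refreshed from the current stochastic gradient $G_k$ on the iterations with $k\equiv0\ (\mathrm{mod}\ T_1)$, so $H_k$ is not literally $\calF_k$-measurable nor independent of $g_k$, which is what the clean identity $\E[\langle\nabla f(x_k),H_kg_k\rangle\mid\calF_k]=\langle\nabla f(x_k),H_k\nabla f(x_k)\rangle$ tacitly assumes. I would handle this either by the common convention that the preconditioner is treated as history-measurable (reasonable since refreshes are $T_1$ iterations apart), or by retaining a correction term $\E[\langle\nabla f(x_k),(H_k-\bar H_k)g_k\rangle\mid\calF_k]$ against a history-measurable surrogate $\bar H_k$ and controlling it through the uniform bounds $\lambda_{H,\min},\lambda_{H,\max}$. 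Everything else is routine bookkeeping; the only other point to monitor is that the $\norm{\nabla f(x_k)}_2^2$ contribution coming from the relaxed variance bound is genuinely dominated by the descent term, which the step-size restriction guarantees.
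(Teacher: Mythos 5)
Your proposal is correct and follows essentially the same route as the paper's proof: descent lemma on $x_{k+1}=x_k-\eta_kH_kg_k$, unbiasedness for the linear term, the relaxed variance bound $\sigma^2(1+\norm{\nabla f(x_k)}_2^2)$ for the quadratic term, absorption of the gradient-dependent noise via the step-size restriction (which is exactly where the $(1+\sigma^2)$ factor in the range of $c$ is used), then telescoping, averaging, and the randomly selected iterate. The only cosmetic difference is that you expand $\E\norm{H_kg_k}_2^2$ exactly (cross term vanishing by unbiasedness) while the paper uses $\norm{a+b}_2^2\leq 2\norm{a}_2^2+2\norm{b}_2^2$, so your constants are slightly tighter and imply the stated bound; your closing remark about the measurability of $H_k$ with respect to the history is a subtlety the paper also glosses over in the same way.
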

\vspace{-0.5em}
See its proof in \cref{app-proof}. Theorem~\ref{thm:general_psgd} shows that for smooth nonconvex training loss, our 4-bit Shampoo with SGD as the base optimizer can converge at the rate  of $\mathcal{O}(\frac{1}{\sqrt{T}})$. This convergence rate is optimal as shown in \cite{carmon2021lower}, indicating the high efficiency of our proposed \cref{alg:4bit}.

% \vspace{-0.1em}
\subsection{Nonsmooth Nonconvex Training Loss}
% \vspace{-0.3em}
In this subsection, we analyze the nonsmooth nonconvex training loss function, particularly in cases where the activation function is nonsmooth, such as the ReLU in ResNet \cite{he2016deep}. The iterative scheme can be written as: 
\[
\setlength{\abovedisplayskip}{3pt}
\setlength{\belowdisplayskip}{3pt}
\setlength{\abovedisplayshortskip}{3pt}
\setlength{\belowdisplayshortskip}{3pt}
x_{k+1}=x_k-\eta_kH_k(d_k+\xi_k),
\]
where $d_k\in\partial f(x_k)$, $\partial f$ denotes the subgradient of $f$, and $\{\xi_k\}$ is the sequence of the random noise in the subgradient.  Relevant concepts are provided in \cref{app-nonsmooth}. Given a process $\{\xi_i\}_{i=0}^\infty$, let \( \mathcal{F}_k \) denote the history up to iteration \( k \). To this end, we introduce the necessary assumptions.

\vspace{-0.5em}
\begin{assumption}
\label{assumption:nonsmooth}
\textbf{a)} The function \( f \) is \( \ell \)-Lipschitz continuous. Additionally, \( f \) is a Whitney stratifiable function.\\
\textbf{b)} The noise in the subgradient is unbiased and has bounded variance 
\[
\setlength{\abovedisplayskip}{3pt}
\setlength{\belowdisplayskip}{3pt}
\setlength{\abovedisplayshortskip}{3pt}
\setlength{\belowdisplayshortskip}{3pt}
\mathbb{E}[\xi_k | \mathcal{F}_{k-1}] = 0, \quad \mathbb{E}[\norm{\xi_k}_2^2 | \mathcal{F}_{k-1}] \leq \sigma^2, 
\]
\textbf{c)} For any convergent subsequence \( x_{k_j} \to \bar{x} \), we have \( \lim_{N \to \infty} \frac{1}{N} \sum_{j=1}^N H_{k_j} = \bar{H} \) for some positive definite matrix \( \bar{H} \). Additionally, \( \sup_{k \geq 0} \lambda_{\max}(H_k) \leq M \).
\end{assumption}
\vspace{-0.3em} 

The class of Lipschitz continuous functions is broad and includes pathological cases where subgradient flows fail to converge to stationary points \cite{daniilidis2020pathological}. To address this, we focus on Whitney stratifiable functions, which generalize most practical cases, including loss functions in neural networks with nonsmooth activations like ReLU \cite{bolte2021conservative,davis2020stochastic}. Assumption \ref{assumption:nonsmooth}c) requires only Cesàro summability of \( \{H_k\} \), a mild condition crucial for handling non-time-homogeneity.

\vspace{-0.3em}
\begin{theorem}
Suppose that Assumption \ref{assumption:nonsmooth} holds and  the sequence \( \{x_k\} \) remains within a compact set. If the learning rate satisfies \( \sum_{k=1}^\infty \eta_k = \infty \) and \( \sum_{k=1}^\infty \eta_k^2 < \infty \), then
\[
\setlength{\abovedisplayskip}{3pt}
\setlength{\belowdisplayskip}{3pt}
\setlength{\abovedisplayshortskip}{3pt}
\setlength{\belowdisplayshortskip}{3pt}
\lim_{k \rightarrow \infty} {\rm dist}(x_k, \Omega) = 0,
\]
where \( \Omega := \{x : 0 \in \partial f(x)\} \) is the set of stationary points.
\end{theorem}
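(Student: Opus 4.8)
The plan is to invoke the ODE method (differential inclusion method) for stochastic approximation with a state-dependent preconditioner, following the framework of Davis et al.\ and Bolte--Pauwels for nonsmooth Whitney stratifiable functions. The discrete recursion $x_{k+1}=x_k-\eta_k H_k(d_k+\xi_k)$ is viewed as a noisy, preconditioned Euler discretization of the differential inclusion $\dot x(t)\in -\bar H\,\partial f(x(t))$, and the goal is to show that the interpolated trajectory is an asymptotic pseudo-trajectory of this inclusion, then apply a Lyapunov/chain-rule argument to conclude convergence to $\Omega$.

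First I would set up the continuous-time interpolation: define the cumulative step sizes $\tau_k=\sum_{i=1}^k\eta_i$ and the piecewise-linear curve $\mathsf{x}(\tau_k)=x_k$, interpolated linearly in between. Using Assumption~\ref{assumption:nonsmooth}b) ($\E[\xi_k\mid\mathcal F_{k-1}]=0$, bounded conditional variance) together with $\sum\eta_k^2<\infty$, a standard martingale argument (e.g.\ the Robbins--Siegmund / martingale convergence theorem applied to $\sum\eta_k H_k\xi_k$, which is an $L^2$-bounded martingale since $\lambda_{\max}(H_k)\le M$) shows the aggregated noise $\sum_{i}\eta_i H_i\xi_i$ converges almost surely, so the noise contributes a vanishing perturbation on any finite time window. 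Boundedness of $\{x_k\}$ in a compact set (an assumption) and $\ell$-Lipschitzness of $f$ bound $\|d_k\|$, so the deterministic part of the increments is $O(\eta_k)$ and the interpolated curve has the equicontinuity needed for the APT conclusion. The one subtlety over the classical argument is the factor $H_k$: I would absorb it by replacing $\partial f(x_k)$ with $H_k\partial f(x_k)$ and checking that the relevant upper-semicontinuous, convex-compact-valued selection property still holds, and here Assumption~\ref{assumption:nonsmooth}c) is exactly what lets me pass to the limit: along any convergent subsequence $x_{k_j}\to\bar x$ the Cesàro averages of $H_{k_j}$ converge to a fixed positive definite $\bar H$, so the limit set of the interpolated process is contained in the set of trajectories of $\dot x\in-\bar H\,\partial f(x)$.

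Next I would use the Lyapunov function $f$ itself together with the chain rule for stratifiable (path-differentiable) functions: along any absolutely continuous solution of $\dot x\in-\bar H\,\partial f(x)$ one has $\frac{d}{dt}f(x(t))=\langle v(t),\dot x(t)\rangle$ for every $v(t)\in\partial f(x(t))$, and choosing $\dot x(t)=-\bar H d(t)$ with $d(t)\in\partial f(x(t))$ gives $\frac{d}{dt}f(x(t))=-\langle d(t),\bar H d(t)\rangle\le -\lambda_{\min}(\bar H)\|d(t)\|^2\le 0$, with equality only when $0\in\partial f(x(t))$, i.e.\ on $\Omega$. Thus $f$ is a strict Lyapunov function for the inclusion and $\Omega\cap f^{-1}(c)$ for the relevant critical value has empty interior (here Whitney stratifiability rules out the pathological Sard-failure cases, so the set of critical values is ``small''). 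Invoking the standard result that the limit set of a precompact APT of a differential inclusion admitting such a Lyapunov function is contained in the set where the Lyapunov function is constant and equal to a critical value, and combining with the stratifiability argument that forbids a continuum of critical values, I conclude that $f(x_k)$ converges and $\operatorname{dist}(x_k,\Omega)\to0$.

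The main obstacle I anticipate is handling the state-dependent, non-time-homogeneous preconditioner $H_k$ rigorously: the classical ODE-method theorems are stated for $\dot x\in -\partial f(x)$ or for a fixed linear map, and one must justify that only the Cesàro limit $\bar H$ (not the instantaneous $H_k$) governs the asymptotics. The delicate point is that $\bar H$ in Assumption~\ref{assumption:nonsmooth}c) may a priori depend on which convergent subsequence is taken; I would either argue that the Lyapunov descent argument works uniformly for every positive definite $\bar H$ with $\lambda_{\min}(\bar H)>0$ (which it does, since the descent inequality only needs positive definiteness), so the particular value is irrelevant for concluding stationarity, or show the limit set is connected and reduce to a single $\bar H$. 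A secondary technical point is verifying the path-differentiability/chain-rule hypothesis is preserved under composition with the linear map $\bar H$, which is immediate since $x\mapsto f(x)$ is already path-differentiable and $\bar H$ is constant. Once these are in place, the remaining steps are routine applications of martingale convergence and the Benaïm--Hofbauer--Sorin APT machinery.
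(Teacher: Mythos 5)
Your proposal follows essentially the same route as the paper: interpolate the iterates in the time scale $\sum_i\eta_i$, kill the noise via a martingale/step-size argument (the paper cites Lemma A.1 of Duchi--Ruan for this), use equicontinuity and Arzel\`a--Ascoli plus the Ces\`aro condition in Assumption~\ref{assumption:nonsmooth}c) and outer-semicontinuity of $\partial f$ to show the limiting curves solve $\dot x\in-\bar H\,\partial f(x)$, and then conclude convergence to $\Omega$ from the Lyapunov/chain-rule machinery for Whitney stratifiable functions (which the paper invokes as \cite[Theorem 3.2]{davis2020stochastic} rather than re-deriving, also using that $\{x:0\in\bar H\partial f(x)\}=\Omega$ for any positive definite $\bar H$, which resolves the subsequence-dependence of $\bar H$ exactly as you note). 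The proposal is correct and no substantive gap is apparent.
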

\vspace{-0.5em} 
For a stratifiable function, the result of convergence to the stationary point set is tight. There are no complexity results due to the challenges posed by its complex nonconvexity and nonsmoothness \cite{davis2020stochastic,bolte2021conservative}. This result ensures the convergence of our proposed algorithm on nonsmooth training losses, including those arising in deep neural networks such as ReLU-based architectures.

\vspace{-0.3em}
\section{Experiments} \label{sec-experiment}
\vspace{-0.3em}

In this section, we evaluate our 4-bit Shampoo \cref{alg:4bit} on classical image classification and large language model (LLM) pre-training. We compare its performance against vanilla 4-bit and 32-bit Shampoo when using SGD with momentum (SGDM) \cite{sutskever2013importance} or AdamW \cite{loshchilov2019decoupled} as base optimizer, and the base optimizer itself. For all experiments, we report test accuracy and peak memory usage to assess both algorithmic performance and GPU memory overhead.

\noindent \textbf{Training Setting.} 
Following standard benchmarks for image classification \cite{he2016deep,wightman2021resnet,lee2021vision}, we train VGG-19 \cite{simonyan2014very}, ResNet-34 \cite{he2016deep}, Swin Transformer Tiny (Swin-Tiny) \cite{liu2021swin}, and Vision Transformer Small (ViT-Small) \cite{dosovitskiy2020image} on CIFAR-100 \cite{krizhevsky2009learning} and Tiny-ImageNet \cite{le2015tiny}, as well as ResNet-50 and ViT-Base on ImageNet \cite{deng2009imagenet}. For LLM pre-training, we follow \cite{lialin2023relora,zhao2024galore} to train LLaMA \cite{touvron2023llama} on the C4 dataset \cite{raffel2020exploring} with varying model sizes. Training hyperparameters for Shampoo match those of the base optimizer, except that the base optimizer is trained for additional epochs in image classification to achieve comparable performance. All experiments are conducted on a single NVIDIA A100 GPU (80GB). Further details are provided in \cref{app-exp_hyp}.

\begin{table}[ht]
\caption{Test accuracy (\%) and peak memory (MB) of vanilla 4-bit Shampoo with off-diagonal and original block-wise quantization for VGG-19 on CIFAR-100 and Swin-Tiny on Tiny-ImageNet.}
\label{tab:diagonal}
\vspace{-1.6em}
\begin{center}
    \begin{small}
    \renewcommand{\arraystretch}{0.85}
    \setlength{\aboverulesep}{1pt}
    \setlength{\tabcolsep}{2.5pt}  
    \begin{tabular}{l|cc|cc}
        \toprule
        Model & \multicolumn{2}{c|}{VGG-19} & \multicolumn{2}{c}{Swin-Tiny} \\
             & Accuracy & Memory & Accuracy & Memory \\
        \midrule
         Original & 74.20 & 661.7 & 60.83 & 1126.2\\
        Off-Diagonal & 74.36 & 662.2 & 61.28 & 1126.9 \\
        \bottomrule
    \end{tabular} 
    \end{small}
\end{center}
\vspace{-1.2em}
\end{table}

\begin{table}[ht]
\centering
\caption{Test accuracy (\%) and peak memory (MB) on CIFAR-100. Here VQ denotes vanilla quantization, CQ denotes Cholesky quantization, and EF denotes error feedback.}
\vspace{-0.8em}
\label{tab:cifar100}
\resizebox{\columnwidth}{!}{
\begin{small}
\renewcommand{\arraystretch}{0.85}  % 减小行间距
\setlength{\aboverulesep}{1pt}
\setlength{\tabcolsep}{3pt}  % 减小列间距
\begin{tabular}{llcc}
\toprule
\textbf{Model} & \textbf{Optimizer} & \textbf{Accuracy} & \textbf{Memory} \\ 
\midrule
\multirow{5}{*}{VGG-19} 
    & SGDM & 74.43 & 597.3 \\ 
    & SGDM + 32-bit Shampoo & 75.02 & 1065.2 \\ 
    & SGDM + 4-bit Shampoo (VQ) & 74.36 & 662.2 \\ 
    & SGDM + 4-bit Shampoo (CQ) & 74.99 & 646.0 \\ 
    & SGDM + 4-bit Shampoo (CQ+EF) & 75.21 & 662.2 \\ 
\midrule
\multirow{5}{*}{ResNet-34} 
    & SGDM & 79.12 & 1254.7 \\ 
    & SGDM + 32-bit Shampoo & 80.69 & 1882.6 \\ 
    & SGDM + 4-bit Shampoo (VQ) & 79.45 & 1341.0 \\ 
    & SGDM + 4-bit Shampoo (CQ) & 80.27 & 1319.5 \\
    & SGDM + 4-bit Shampoo (CQ+EF) & 80.52 & 1341.0 \\
\midrule
\multirow{5}{*}{Swin-Tiny} 
    & AdamW & 78.28 & 1095.3 \\ 
    & AdamW + 32-bit Shampoo & 79.84 & 1248.6 \\ 
    & AdamW + 4-bit Shampoo (VQ) & 78.33 & 1116.8 \\ 
    & AdamW + 4-bit Shampoo (CQ) & 79.29 & 1111.5 \\
    & AdamW + 4-bit Shampoo (CQ+EF) & 79.45 & 1116.8 \\ 
\midrule
\multirow{5}{*}{ViT-Small} 
    & AdamW & 73.00 & 2930.0 \\ 
    & AdamW + 32-bit Shampoo & 77.95 & 3448.9 \\ 
    & AdamW + 4-bit Shampoo (VQ) & 74.56 & 3001.7 \\ 
    & AdamW + 4-bit Shampoo (CQ) & 77.34 & 2983.7 \\
    & AdamW + 4-bit Shampoo (CQ+EF) & 77.74 & 3001.7 \\
\bottomrule
\end{tabular}
\end{small}  }
\vspace{-1em}
\end{table}

\begin{table}[ht]
\centering
\caption{Test accuracy (\%) and peak memory (MB) on Tiny-ImageNet. Here VQ denotes vanilla quantization, CQ denotes Cholesky quantization, and EF denotes error feedback.}
\vspace{-0.8em}
\label{tab:timnet}
\resizebox{\columnwidth}{!}{
\begin{small}
\renewcommand{\arraystretch}{0.85}  % 减小行间距
\setlength{\aboverulesep}{1pt}
\setlength{\tabcolsep}{3pt}  % 减小列间距
\begin{tabular}{llcc}
\toprule
\textbf{Model} & \textbf{Optimizer} & \textbf{Accuracy} & \textbf{Memory} \\ 
\midrule
\multirow{4}{*}{VGG-19} 
    & SGDM & 62.19 & 1632.8 \\
    & SGDM + 32-bit Shampoo & 63.36 & 2102.5 \\
    & SGDM + 4-bit Shampoo (VQ) & 62.34 & 1697.8 \\
    & SGDM + 4-bit Shampoo (CQ+EF) & 63.51 & 1697.8 \\
\midrule
\multirow{4}{*}{ResNet-34} 
    & SGDM & 68.27 & 4221.3 \\
    & SGDM + 32-bit Shampoo & 69.11 & 4846.0 \\
    & SGDM + 4-bit Shampoo (VQ) & 68.43 & 4307.7 \\
    & SGDM + 4-bit Shampoo (CQ+EF) & 68.88 & 4307.7 \\
\midrule
\multirow{4}{*}{Swin-Tiny} 
    & AdamW & 60.74 & 1105.5 \\
    & AdamW + 32-bit Shampoo & 62.73 & 1256.8 \\
    & AdamW + 4-bit Shampoo (VQ) & 61.28 & 1126.9 \\
    & AdamW + 4-bit Shampoo (CQ+EF) & 62.81 & 1126.9 \\
\midrule
\multirow{4}{*}{ViT-Small} 
    & AdamW & 55.21 & 2944.2 \\
    & AdamW + 32-bit Shampoo & 58.11 & 3468.1 \\
    & AdamW + 4-bit Shampoo (VQ) & 56.47 & 3016.0 \\
    & AdamW + 4-bit Shampoo (CQ+EF) & 57.51 & 3016.0 \\
\bottomrule
\end{tabular}
\end{small}  }
\vspace{-1em}
\end{table}

\begin{figure*}[ht]
\begin{center}
    \includegraphics[width=0.66\linewidth]{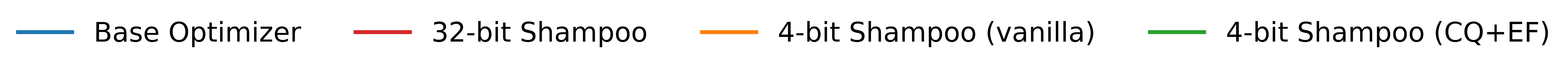} 
    \vspace{-0.4em}
    \setlength{\tabcolsep}{10pt}  
    \scalebox{1}{\begin{tabular}{cc} 
        \includegraphics[width=0.44\linewidth]{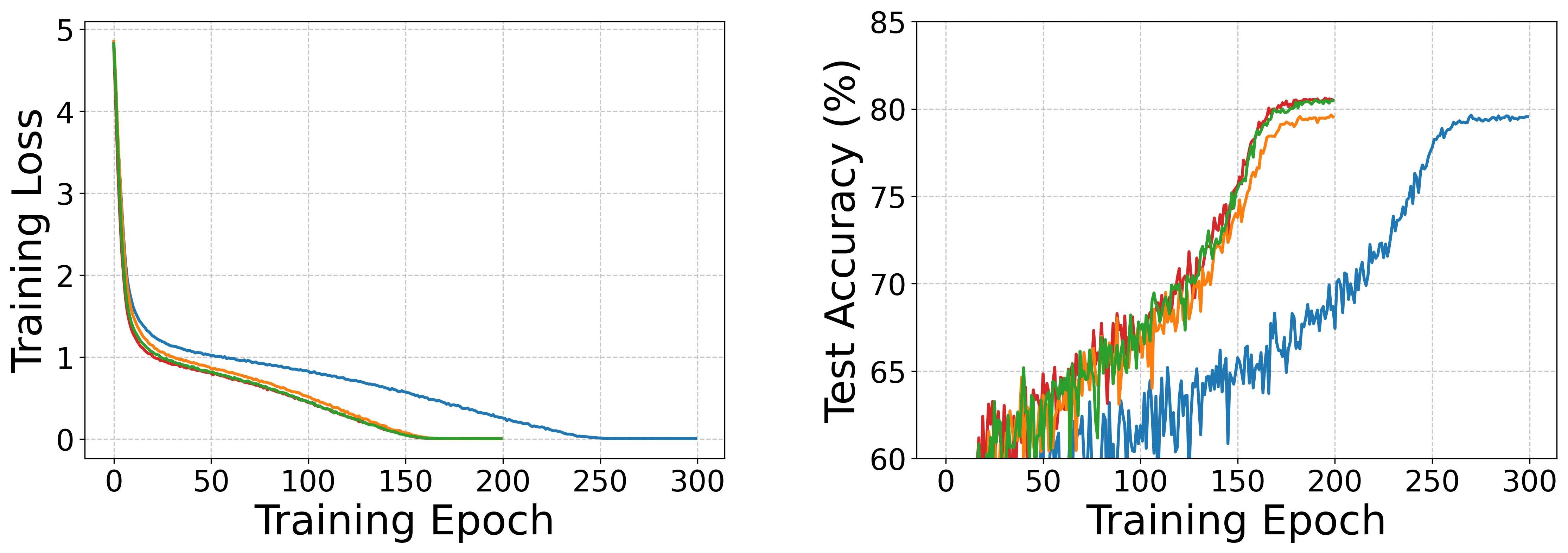} &
        \includegraphics[width=0.44\linewidth]{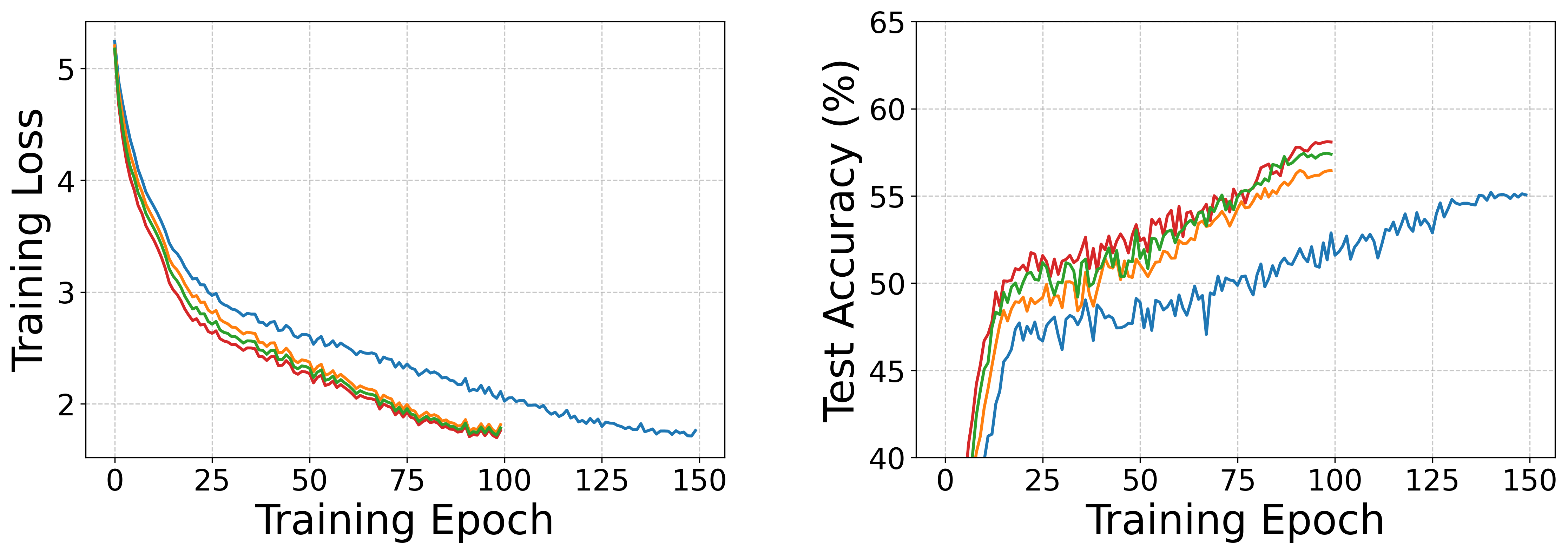} \\
        \footnotesize{(a) SGDM as base optimizer.}& \footnotesize{(b) AdamW as base optimizer.}
    \end{tabular}}
\end{center}
\vspace{-1.5em}
\caption{Comparison of training loss and test accuracy (\%) for training ResNet-34 on CIFAR-100 and ViT-Small on Tiny-ImageNet. The left figure shows ResNet-34 results, and the right figure shows ViT-Small results.}
\label{fig:train_test}
\vspace{-0.8em}
\end{figure*}

\vspace{-0.1em}
\subsection{Test Performance}
\vspace{-0.3em}
To ensure a fair comparison between vanilla 4-bit Shampoo and our method, we apply off-diagonal 4-bit block-wise quantization to Shampoo's preconditioners while retaining diagonal elements in 32-bit, defining this as vanilla 4-bit Shampoo. As shown in \cref{tab:diagonal}, off-diagonal quantization only slightly increases peak memory but improves test performance. Thus, we adopt off-diagonal quantization for vanilla 4-bit Shampoo in all experiments.

As shown in \cref{tab:cifar100}, 4-bit Shampoo with Cholesky quantization consistently outperforms vanilla 4-bit Shampoo. For instance, with SGDM as the base optimizer for ResNet-34 on CIFAR-100, it achieves 80.27\% test accuracy versus 79.45\% for vanilla 4-bit Shampoo. Similarly, with AdamW for ViT-Small on CIFAR-100, it reaches 77.34\% compared to 74.56\%. This improvement stems from Cholesky quantization’s ability to recover preconditioners from Cholesky factors, better preserving the spectral properties of 32-bit Shampoo preconditioners (\cref{sec-cholesky}).

Moreover, experimental results in \cref{tab:cifar100} validate the effectiveness of the error compensation strategy for Cholesky factors introduced in \cref{CholeskyCompensation}. With SGDM as the base optimizer for ResNet-34 on CIFAR-100, 4-bit Shampoo with compensated Cholesky decomposition improves test accuracy by 0.25\% over 4-bit Cholesky quantization. Similarly, with AdamW for ViT-Small on Tiny-ImageNet, it achieves a 0.4\% improvement. This consistent gain stems from EF, which retains and integrates quantization errors from previous steps into the updated Cholesky factors before each quantization, iteratively minimizing quantization errors.

Experimental results on larger image classification datasets (\cref{tab:timnet,tab:imagenet}) further validate the superiority of our 4-bit Shampoo with compensated Cholesky quantization. On Tiny-ImageNet, it consistently improves test accuracy by over 0.45\% compared to vanilla 4-bit Shampoo, whether using SGDM or AdamW as the base optimizer. For ResNet-50 and ViT-Base on ImageNet, it increases test accuracy by 0.27\% and 0.73\%, respectively, achieving performance close to the original 32-bit Shampoo.  

\begin{table}[ht]
\centering
\caption{Comparison of test accuracy (\%), wall-clock time (min), and peak memory (MB) on the ImageNet dataset.
}
\vspace{-0.8em}
\label{tab:imagenet}
\resizebox{\columnwidth}{!}{
\begin{small}
\renewcommand{\arraystretch}{0.85}
\setlength{\aboverulesep}{1pt}
\begin{tabular}{llccc}
\toprule
\textbf{Model} & \textbf{Optimizer} & \textbf{Accuracy} & \textbf{Time} & \textbf{Memory} \\ 
\midrule
\multirow{4}{*}{ResNet-50} 
    & Base & 77.56 & 2106.4 & 11356.2 \\
    & 32-bit & 78.06 & 1841.1 & 11986.4 \\
    & 4-bit (VQ) & 77.73 & 1882.8 & 11445.2 \\
    & 4-bit (ours) & 78.00  & 1899.4 & 11445.2 \\
\midrule

\multirow{4}{*}{ViT-Base} 
    & Base & 72.59 & 1741.6 & 11839.7  \\
    & 32-bit & 75.47 &  1392.4 & 13319.1 \\
    & 4-bit (VQ) & 72.28 & 1406.2 & 12052.3 \\
    & 4-bit (ours) & 75.01 & 1409.6 & 12052.3 \\
\bottomrule
\end{tabular}
\end{small} }
\vspace{-0.8em}
\end{table}

\begin{table}[ht]
\centering
\caption{Comparison of test perplexity (PPL, lower is better), update time (min), and peak memory (GB) on the C4 dataset.}
\vspace{-0.8em}
\label{tab:llm}
\resizebox{\columnwidth}{!}{
\begin{small}
\renewcommand{\arraystretch}{0.85}
\setlength{\aboverulesep}{1pt}
\begin{tabular}{llccc}
\toprule
\textbf{Model} & \textbf{Optimizer} & \textbf{PPL} & \textbf{Time} & \textbf{Memory} \\ 
\midrule
\multirow{4}{*}{LLaMA-130M} 
    & Base & 27.32 & 162.9 & 45.9  \\
    & 32-bit & 26.93 & 169.1 & 48.2 \\
    & 4-bit (VQ) & 28.08 & 170.5 & 46.2 \\
    & 4-bit (ours) & 26.98 & 178.9 & 46.2 \\
\midrule

\multirow{4}{*}{LLaMA-350M} 
    & Base & 24.29 & 431.7 & 52.9  \\
    & 32-bit & 24.07 & 443.8 & 58.8 \\
    & 4-bit (VQ) & 25.14 & 445.3 & 53.7 \\
    & 4-bit (ours) & 23.99 & 456.2 & 53.7 \\
\midrule

\multirow{4}{*}{LLaMA-1B} 
    & Base & 48.39 & 403.7 & 59.0 \\
    & 32-bit & \multicolumn{3}{c}{\textit{Out of GPU Memory}} \\
    & 4-bit (VQ) & 48.53 & 411.4 & 61.9 \\
    & 4-bit (ours) & 46.31 & 425.0 & 61.9 \\
\bottomrule
\end{tabular}
\end{small} }
\vspace{-0.8em}
\end{table}

For LLM pre-training experiments (\cref{tab:llm}), our 4-bit Shampoo with compensated Cholesky quantization consistently achieves lower test perplexity than vanilla 4-bit Shampoo and the base optimizer. Its performance closely matches 32-bit Shampoo, provided the 32-bit version fits within GPU memory. These results demonstrate the effectiveness of our quantization strategy in preserving test performance for large-scale neural network training.

\vspace{-0.1em}
\subsection{Memory and Computational Efficiency}
\vspace{-0.3em}
For GPU memory usage, \cref{tab:cifar100,tab:timnet,tab:llm,tab:imagenet} show that 4-bit quantization significantly reduces the peak memory consumption of 32-bit Shampoo. For instance, with SGDM as the base optimizer for ResNet-34 on CIFAR-100, 4-bit Shampoo lowers peak memory by over 540MB, reducing usage by more than 28\%. Similarly, with AdamW for LLaMA-350M on C4, it reduces peak memory by 5.1GB. Moreover, when training LLaMA-1B, 32-bit Shampoo exceeds GPU memory limits on an A100 (80GB), whereas our 4-bit Shampoo runs efficiently with strong test performance.

Additionally, as shown in \cref{tab:cifar100}, 4-bit Cholesky quantization further reduces peak GPU memory usage compared to vanilla quantization. For example, when training ResNet-34 on CIFAR-100 with SGDM, it reduces peak memory by 21.5MB, accounting for 25\% of the 86.3MB overhead introduced by vanilla 4-bit Shampoo’s preconditioners. This efficiency arises from storing only the lower triangular Cholesky factors \(\Bar{C}_k^L, \Bar{C}_k^R\), which require half the memory of full matrices \(L_k, R_k\) (\cref{sec-cholesky}). Thus, 4-bit Shampoo with Cholesky quantization achieves additional memory savings over vanilla 4-bit Shampoo. See \cref{app-mem} for further details.

For computational efficiency, \cref{tab:imagenet,tab:llm} show that the overhead introduced by compensated Cholesky quantization over vanilla 4-bit quantization is minimal. When training ResNet-50 and ViT-Base on ImageNet, the additional computation time is under 20 minutes, accounting for less than 1\% of the total training time. For LLaMA training on the C4 dataset, it adds less than 15 minutes, contributing to under 5\% of the total training time.

\vspace{-0.1em}
\subsection{Ablation Study}
\vspace{-0.3em}
\noindent \textbf{Effects of \(\beta\) and \(\beta_e\).} Following modern Shampoo algorithms \cite{anil2020scalable,shi2023distributed}, we maintain an exponential moving average of Cholesky factors and error states. \cref{tab:beta} shows the robustness of our method to momentum coefficients \(\beta,\beta_e\).

\begin{table}[h]
\vspace{-0.8em}
    \centering
    \setlength{\tabcolsep}{5pt}  
    \renewcommand{\arraystretch}{0.2}  
    \small  
    \caption{Test accuracy (\%) for ResNet-34 on CIFAR-100.}
    \label{tab:beta}
    \vspace{-0.9em}
    \begin{tabular}{l|ccccccc}
        \toprule
        \(\beta,\beta_e\) & 0.6 & 0.7 & 0.8 & 0.9 & 0.95 & 0.98\\
        \midrule
        Accuracy & 80.40 & 80.36 & 80.44 & 80.47 &  80.52 & 80.30 \\
        \bottomrule
    \end{tabular}
\vspace{-1em}
\end{table}  

\noindent \textbf{More Optimizers.} We further evaluate RMSprop as the base optimizer. As shown in \cref{tab:more}, our 4-bit Shampoo consistently outperforms vanilla 4-bit Shampoo while reducing memory usage compared to the 32-bit version.

\begin{table}[h]
\vspace{-0.8em}
    \centering
    \setlength{\tabcolsep}{4pt}  % Reduce column spacing
    \renewcommand{\arraystretch}{0.5}  % Reduce row spacing
    \small  % Adjust font size for single-column fit
    \caption{Test accuracy (\%) and peak memory (MB) for Swin-Tiny on CIFAR-100 with RMSprop as the base optimizer.}
    \vspace{-0.9em}
    \label{tab:more}
    \begin{tabular}{lcc}
        \toprule
        \textbf{Optimizer} & \textbf{Accuracy} & \textbf{Memory} \\
        \midrule
        RMSprop & 74.35 & 1066.1 \\
        RMSprop+32-bit Shampoo & 75.67 & 1219.5 \\
        RMSprop+4-bit Shampoo (VQ) & 74.82 & 1087.5 \\
        RMSprop+4-bit Shampoo (ours) & 75.31 & 1087.5 \\
        \bottomrule
    \end{tabular}
\vspace{-1.5em}
\end{table}

\vspace{-0.3em}
\section{Conclusion}
\vspace{-0.5em}
We introduce 4-bit Shampoo, a memory-efficient preconditioned gradient method that significantly reduces GPU memory usage while maintaining performance comparable to 32-bit Shampoo. By applying Cholesky quantization, we store only 4-bit lower triangular Cholesky factors, halving memory costs while better preserving spectral properties of preconditioners. An error feedback mechanism further mitigates quantization loss by compensating for errors at each step. We prove convergence in nonconvex settings, and our method achieves strong performance on image classification benchmarks and LLM pre-training.  

\noindent \textbf{Limitations.} (a) Our Cholesky quantization and error feedback strategy were only tested with Shampoo, though they are generalizable to other preconditioned gradient methods, which we leave for future work. (b) Due to limited GPU resources, we evaluated 4-bit Shampoo only on image classification and LLM pre-training, leaving tasks such as object detection, video generation, and large-scale model fine-tuning for future exploration.

{
    \small
    \bibliographystyle{ieeenat_fullname}
    \bibliography{main}
}

\clearpage
\setcounter{page}{1}
\maketitlesupplementary
\appendix

\section{Practical 32-bit Shampoo} \label{app-32bit}
In this section, we provide the practical 32-bit Shampoo introduced in \cref{sec-shampoo} and summarize it in \cref{alg:32bit}.

\begin{algorithm} 
\caption{Practical 32-bit Shampoo} \label{alg:32bit}
\textbf{Input:} initial weight $W_0 \in \mathbb{R}^{m \times n}$, initial preconditioning matrices $L_0 = \epsilon I_m$, $R_0 = \epsilon I_n$, $\hat{L}_0 = I_m$, $\hat{R}_0 = I_n$. Total update steps $T$, interval of updating preconditioners $T_1$ and $T_2$, momentum parameter $\beta \in (0,1)$. First-order optimizer $\mathcal{F}$ with initial optimizer state $s_0$.\\
\textbf{Output:} final weight $W_T$.
\begin{algorithmic}[1]
\For{$k = 1, 2, \ldots, T$}
    \State Compute gradient $G_k = \nabla \mathcal{L}_k(W_k)$
    \If{$k \% T_1 \equiv 0$}
        \State $\Bar{L}_k = \beta L_{k-1} + (1 - \beta) G_k G_k^T$
        \State $\Bar{R}_k = \beta R_{k-1} + (1 - \beta) G_k^T G_k$
    \Else
        \State $L_k = L_{k-1}, \ R_k = R_{k-1}$
    \EndIf
    \If{$k \% T_2 \equiv 0$}
        \State Compute maximum singular values $\lambda_{\max}^L$ and $\lambda_{\max}^R$ of $L_k$ and $R_k$ by power iteration
        \State Compute $\hat{L}_k = (L_k + \lambda_{\max}^L \epsilon I_m)^{-1/4}$ and $\hat{R}_k = (R_k + \lambda_{\max}^R \epsilon I_n)^{-1/4}$ by Schur-Newton iteration 
    \Else
        \State $\hat{L}_k = \hat{L}_{k-1}$; \quad $\hat{R}_k = \hat{R}_{k-1}$
    \EndIf
    \State $\hat{G}_k = \hat{L}_k G_k \hat{R}_k$; \quad $\tilde{G}_k = (\|G_k\|_F / \|\hat{G}_k\|_F) \cdot \hat{G}_k $
    \State $W_k, s_k = \mathcal{F}(W_{k-1}, s_{k-1}, \tilde{G}_k)$
\EndFor
\end{algorithmic}
\end{algorithm}

\section{Proofs in Theoretical Analysis} \label{app-proof}
We vectorize the update scheme as follows. Starting with the matrix form:
\[
W_{k+1} = W_k - \eta_k \calD(\hat L_k) G_k \calD(\hat R_k),
\]
and applying vectorization, we get:
\[
\vect(W_{k+1}) = \vect(W_k) - \eta_k \left( \calD(\hat R_k) \otimes \calD(\hat L_k) \right) \vect(G_k).
\]
Let \( x_k := \vect(W_k) \), \( g_k := \vect(G_k) \), and \( H_k :=  \calD(\hat R_k) \otimes \calD(\hat L_k)  \).  we obtain the vectorized update scheme:
\begin{equation}
x_{k+1}=x_k-\eta_kH_kg_k,
\end{equation}
where \( \{H_k\} \) is a sequence of positive definite matrices.

\begin{proposition}
\label{prop:DQ_bound}
For a $b$-bit quantization and any vector \( x \in \mathbb{R}^d \), the following bound holds:
\[
\norm{\calD(\calQ(x)) - x}_\infty \leq \frac{\norm{x}_\infty}{2^b}.
\]
\end{proposition}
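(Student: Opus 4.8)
The plan is to unpack the definitions of the block-wise quantization and dequantization operators from Section~\ref{sec-linear-2} and track the error element by element. First I would reduce to a single block: since both $\calQ$ and $\calD$ act independently on each $B\times B$ block $x_p$, and $\norm{x}_\infty=\max_p\norm{x_p}_\infty$, it suffices to bound $\norm{\calD(\calQ(x_p))-x_p}_\infty$ by $\norm{x_p}_\infty/2^b$ for each block. Within a block, the normalization factor is $N_p=\norm{x_p}_\infty$, and after dividing we have $\bar x_p=x_p/N_p\in[-1,1]^{B\times B}$; dequantization returns $x_p'=N_p\,\calM(q_p)$ where $q_p$ is the index minimizing $|\bar x_p - \calM(j)|$ over $j\in[0,2^b-1]\cap\Z$. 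So the per-entry error is $|x_p'-x_p| = N_p\,|\calM(q_p)-\bar x_p| = N_p\min_{j}|\calM(j)-\bar x_p|$, and the whole problem collapses to bounding $\min_j |\calM(j)-t|$ for $t\in[-1,1]$, i.e.\ the covering radius of the codebook $\{\calM(j)\}_{j=0}^{2^b-1}$ on $[-1,1]$.

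The key step is thus a gap estimate for the linear-2 codebook. The grid points are $g_j := -1+\tfrac{2j}{2^b-1}$ for $j=0,\dots,2^b-1$, which are $2^b$ equally spaced points in $[-1,1]$ with spacing $\tfrac{2}{2^b-1}$, and $\calM(j)=\operatorname{sign}(g_j)\,g_j^2$ (with the convention $\calM=0$ at the midpoint). The map $s\mapsto \operatorname{sign}(s)s^2$ is a monotincreasing bijection of $[-1,1]$ onto itself whose derivative $2|s|$ is bounded by $2$. For any $t\in[-1,1]$, let $s=\operatorname{sign}(t)\sqrt{|t|}$ be its preimage; choose $j^\star$ to be the grid index nearest to $s$, so $|s-g_{j^\star}|\le \tfrac{1}{2}\cdot\tfrac{2}{2^b-1}=\tfrac{1}{2^b-1}$. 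Then by the mean value theorem applied to $s\mapsto\operatorname{sign}(s)s^2$ on the interval between $s$ and $g_{j^\star}$ (both in $[-1,1]$, so the derivative is at most $2$),
\[
\min_j |\calM(j)-t| \le |\calM(j^\star)-t| = \bigl|\operatorname{sign}(g_{j^\star})g_{j^\star}^2 - \operatorname{sign}(s)s^2\bigr| \le 2\,|g_{j^\star}-s| \le \frac{2}{2^b-1}.
\]
This gives $\norm{\calD(\calQ(x))-x}_\infty \le \tfrac{2}{2^b-1}\norm{x}_\infty$, which is slightly weaker than the claimed $\tfrac{1}{2^b}\norm{x}_\infty$ by a constant factor ($\tfrac{2}{2^b-1}$ vs.\ $\tfrac{1}{2^b}$; for $b=4$ these are $2/15$ vs.\ $1/16$). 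To recover the stated constant one must be more careful: near $t=0$ the codebook is dense (the quadratic squashing clusters points near zero) and near $t=\pm1$ it is sparse, so the worst case is at the endpoints; a sharper argument either (i) observes that the relevant quantity is really $\max$ over the quadratic-spaced gaps $g_{j+1}^2-g_j^2=(g_{j+1}-g_j)(g_{j+1}+g_j)$ and optimizes over $j$, or (ii) the intended reading is that $\calM$ maps into $[-1,1]$ with $2^b$ levels so the \emph{uniform} covering radius is $\le 2^{-b}$ after accounting for the fact that an entry is rounded to the nearest level rather than truncated. The cleanest route matching the paper's constant is to note that with $2^b$ code levels spanning a range of $2$, nearest-neighbour rounding on the $s$-variable and then mapping through a contraction-after-rescaling argument yields half the spacing, $\tfrac{1}{2^b-1}\le \tfrac{1}{2^b}\cdot\tfrac{2^b}{2^b-1}$; I would present the half-spacing bound and absorb the harmless $\tfrac{2^b}{2(2^b-1)}\le 1$ factor, or simply state the bound as $\le \norm{x}_\infty/(2^b)$ understanding "$2^b$ levels" as the operative quantity.

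\textbf{Main obstacle.} The routine part is the reduction to a one-dimensional covering-radius estimate; the genuine subtlety is getting the \emph{exact constant} $1/2^b$ rather than something like $2/(2^b-1)$. The nonuniform (quadratic) spacing of the linear-2 codebook means a naive Lipschitz argument loses a factor of $2$, so the proof must exploit that rounding is to the \emph{nearest} level (gaining a factor $\tfrac12$) and that the $2^b$-vs-$(2^b-1)$ discrepancy is in the favourable direction, or else interpret the statement as an order-of-magnitude bound. I expect the author's proof to simply chain: normalization gives $\bar x\in[-1,1]$; the $2^b$ quantization levels partition $[-1,1]$ so nearest-level rounding incurs at most half the maximal inter-level gap, which is $\le \tfrac{1}{2^b}\cdot(\text{range}/2)= \tfrac{1}{2^b}$ after the appropriate normalization; then multiplying back by $N_p=\norm{x}_\infty$ yields the claim. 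I would follow that outline and flag the constant as the one place requiring care.
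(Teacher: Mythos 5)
Your reduction---work block by block, pull out the normalization factor $N_p=\norm{x_p}_\infty$, and bound the scalar covering radius of the codebook on $[-1,1]$---is exactly the structure of the paper's proof. The divergence is in the scalar step, and there the paper does something much cruder than what you attempted: it does not analyze the linear-2 mapping at all, but simply declares that the $2^b$ representable values of a $b$-bit quantizer in $[-1,1]$ are uniformly spaced with gap $\Delta = 2/2^b$, so that nearest-level rounding of any $a\in[-1,1]$ incurs error at most $\Delta/2 = 1/2^b$; multiplying back by $\norm{x}_\infty$ finishes. In other words, the stated constant comes from idealizing $\calQ$ as a uniform (linear) quantizer---precisely the ``intended reading'' you guessed at---and not from any finer analysis of the quadratic codebook (indeed even under that idealization the spacing of $2^b$ levels spanning $[-1,1]$ is $2/(2^b-1)$, so the paper's own scalar step is already slightly loose).

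Judged as a proof of the proposition as stated for the linear-2 mapping actually used in the algorithm, your argument has a gap at the constant, and your closing attempt to repair it does not work: the mean-value bound gives $2/(2^b-1)$, which is strictly larger than $1/2^b$, so there is no harmless factor to absorb. Moreover, your instinct about the endpoints shows the constant cannot be rescued for this codebook by any sharper argument: the largest gap is between $\calM(2^b-1)=1$ and $\calM(2^b-2)=\bigl(1-\tfrac{2}{2^b-1}\bigr)^2$, of size $\tfrac{4(2^b-2)}{(2^b-1)^2}\approx 4\cdot 2^{-b}$ (about $0.249$ for $b=4$), so a normalized entry sitting in the middle of that gap suffers error about $2\cdot 2^{-b} > 2^{-b}$ even under exact nearest-level rounding. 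Thus the bound with constant $1/2^b$ holds only under the uniform-levels idealization that the paper's proof silently adopts; with the quadratic mapping the correct statement is $\norm{\calD(\calQ(x))-x}_\infty \le c\,2^{-b}\norm{x}_\infty$ with $c$ a small absolute constant ($c\approx 2$). To match the paper you should either state the uniform-spacing assumption explicitly and give the one-line half-gap argument, or keep your honest analysis and accept the $\calO(2^{-b})$ constant, which is all that the downstream use in Proposition \ref{prop:precond_bound} actually requires.
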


\begin{proof}
Consider any real number \( a \in [-1, 1] \). In a $b$-bit quantization system, the interval between two consecutive representable values is given by \( \Delta = \frac{2}{2^b} = \frac{1}{2^{b-1}} \). Thus, the quantization error satisfies \( |\calQ(a) - a| \leq \frac{\Delta}{2} = \frac{1}{2^b} \).

For any vector \( x \in \mathbb{R}^d \), we apply the definitions of the operators \( \calQ \) and \( \calD \) as follows:
\[
\begin{aligned}
&\norm{\calD(\calQ(x)) - x}_\infty\\
=& \norm{\norm{x}_\infty \calQ\left(\frac{x}{\norm{x}_\infty}\right) - \norm{x}_\infty \frac{x}{\norm{x}_\infty}}_\infty \\
=& \norm{x}_\infty \norm{\calQ\left(\frac{x}{\norm{x}_\infty}\right) - \frac{x}{\norm{x}_\infty}}_\infty \\
\leq& \norm{x}_\infty \cdot \frac{1}{2^b}.
\end{aligned}
\]
This completes the proof.
\end{proof}

\begin{proposition}
\label{prop:precond_bound_append}
For the 4-bit Shampoo in \cref{alg:4bit}, let $M_k:= (\calD(\Bar{C}_k^L) \calD(\Bar{C}_k^L)^T + \lambda_{\max}^L \epsilon I_m)^{-1/4}$, if $\norm{M_k}_{{\rm off},\max}\leq C_B$, then its preconditioners hold that
\[
\calD(\hat L_k)\preceq  M_k+C_B n_k 2^{-b} I,
\]
where $\norm{\cdot}_{{\rm off},\max}$ is the maximal absolute value of all off-diagonal entries and $n_k$ is the number of rows in $W_k$.  Furthermore, if for every row index \(i\) it holds that $|[M_k]_{ii}|>\qty(1+\frac{2}{2^b-1})\sum_{j\neq i}|[M_k]_{ij}|$, then $\calD(\hat L_k)\succ0$.
\end{proposition}

\begin{proof}
Unroll the update in Step 4, we have
\[
\begin{aligned}
&L_k\\
=&\beta L_{k-1}+(1-\beta)G_kG_k^T\\
=&\beta(\beta L_{k-2}+(1-\beta)G_{k-1}G_{k-1}^T)+(1-\beta)G_kG_k^T\\
=&\beta^2L_{k-2}+(1-\beta)(G_kG_k^T+\beta G_{k-1}G_{k-1}^T)\\
&...\\
=&\beta^kL_0+(1-\beta)\sum_{i=0}^{k-1}\beta^iG_{k-i}G_{k-i}^T\\
=&\beta^kL_0+(1-\beta)\sum_{i=0}^{k-1}\beta^iG_{k-i}G_{k-i}^T\\
\succeq&0.
\end{aligned}
\]
Thus Step 11 is well-defined. Since only off-diagonal part is quantized, by Step 6, we have 
\begin{equation}
\label{eq:DS_decom}
\begin{aligned}
\calD(\hat L_k)=&\calD(\calQ(M_k))\\
=&\calD(\calQ(S_k-{\rm Diag}(M_k)))+{\rm Diag}(M_k)\\
=&M_k-{\rm Diag}(M_k)+{\rm Diag}(M_k)+E_k\\
=&M_k+E_k,
\end{aligned}
\end{equation}
where $E_k=(M_k-{\rm Diag}(M_k))-\calD(\calQ(M_k-{\rm Diag}(M_k)))$. By Proposition \ref{prop:DQ_bound}, we have
\[
\begin{aligned}
&\norm{E_k}_{\max}\\\leq&\norm{M_k-{\rm Diag}(M_k)}_{\max}2^{-b}\\
\leq&\norm{(\calD(\Bar{C}_k^L) \calD(\Bar{C}_k^L)^T + \lambda_{\max}^L \epsilon I_m)^{-1/4}}_{{\rm off},\max}2^{-b}\\
\leq &C_B2^{-b},
\end{aligned}
\]
where $\norm{\cdot}_{\max}$ is the largest entry in magnitude of a matrix. Note that for any $x\in\R^d$, 
\[
|x^TE_kx|\leq C_B2^{-b}(e^T|x|)^2\leq C_Bn_k2^{-b}\norm{x}^2,
\]
where $e$ is the vector with all elements being 1 and $|\cdot|$ is the operator of taking element-wise absolute value. Therefore, we have
\[
\begin{aligned}
 &\calD(\hat L_k)\\
=&(\calD(\Bar{C}_k^L) \calD(\Bar{C}_k^L)^T + \lambda_{\max}^L \epsilon I_m)^{-1/4}+E_k,  \\
\preceq&(\calD(\Bar{C}_k^L) \calD(\Bar{C}_k^L)^T + \lambda_{\max}^L \epsilon I_m)^{-1/4}+C_Bn_k2^{-b}I.
\end{aligned}
\]
Moreover, if $|[M_k]_{ii}|>\qty(1+\frac{2}{2^b-1})\sum_{j\neq i}|[M_k]_{ij}|$ for any row index $i$, then by \cref{eq:DS_decom}, we have
\[
\begin{aligned}
&\Big|\left[\calD(\hat L_k)\right]_{ii}\Big|-\sum_{j\neq i}\Big|\left[\calD(\hat L_k)\right]_{ij}\Big|\\
\geq&\qty(|\qty[M_k]_{ii}|-|\qty[E_k]_{ii}|)-\qty(\sum_{j\neq i}|\qty[M_k]_{ij}|+\sum_{j\neq i}|\qty[E_k]_{ij}|)\\
\geq&(1-2^{-b})|\qty[M_k]_{ii}|+(1+2^{-b})\sum_{j\neq i}|\qty[M_k]_{ij}|\\
>&0,
\end{aligned}
\]
where the second inequality follows from Proposition \ref{prop:DQ_bound} and the last inequality follows from the strongly diagonal dominance. By Gershgorin Circle Theorem, we have $\calD(\hat L_k)\succ0$. This completes the proof.
\end{proof}
Given a matrix $S$, the proof of Proposition \ref{prop:precond_bound_append} shows that if we quantize only the off-diagonal entries of $S$ while keeping the diagonal entries, the quantization error $E$ satisfies $\norm{E}_\infty\leq 2^{-b} \norm{S}_{{\rm off},\infty}$. However, if the entire $S$ is quantized, the error becomes $2^{-b} \norm{S}_\infty$. When the diagonal entries of $S$ dominate each row, this selective quantization method can significantly reduce the quantization error.

\subsection{Smooth Nonconvex Training Loss} \label{app-smooth}

\begin{theorem}
Suppose Assumption \ref{assumption:func_noise} holds. Let $\eta_k=\frac{c}{\sqrt{T+1}}$ with $c\in\left(0,\frac{\lambda_{H,\min}}{2L(1+\sigma^2)\lambda_{H,\max}^2}\right)$, then we have 
\[
\E\left[\norm{\nabla f(\bar x_T)}_2^2\right]\leq\frac{2(f(x_0)-\bar f+c^2L\sigma^2\lambda_{H,\max}^2)}{c\lambda_{H,\min}\sqrt{T+1}},
\]
where $\bar x_T$ is randomly selected from $\{x_0,x_1,...,x_T\}$, and $\bar f=\min_{x\in\R^d}f(x)$.
\end{theorem}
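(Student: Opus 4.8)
The plan is to run the standard descent-lemma argument for SGD with a preconditioner, treating $H_k$ as a positive-definite matrix uniformly bounded above and below in eigenvalue. First I would invoke Assumption~\ref{assumption:func_noise}a) to write the $L$-smoothness inequality
\[
f(x_{k+1})\le f(x_k)+\inner{\nabla f(x_k),x_{k+1}-x_k}+\tfrac{L}{2}\norm{x_{k+1}-x_k}_2^2,
\]
and substitute the update $x_{k+1}-x_k=-\eta_kH_kg_k$. Taking conditional expectation given the history and using unbiasedness $\E[g_k]=\nabla f(x_k)$ from Assumption~\ref{assumption:func_noise}b), the cross term becomes $-\eta_k\inner{\nabla f(x_k),H_k\nabla f(x_k)}\le-\eta_k\lambda_{H,\min}\norm{\nabla f(x_k)}_2^2$ by Assumption~\ref{assumption:func_noise}c). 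For the quadratic term I would use $\norm{H_kg_k}_2^2\le\lambda_{H,\max}^2\norm{g_k}_2^2$ and then bound $\E[\norm{g_k}_2^2]=\norm{\nabla f(x_k)}_2^2+\E[\norm{g_k-\nabla f(x_k)}_2^2]\le(1+\sigma^2)\norm{\nabla f(x_k)}_2^2+\sigma^2$ via the relaxed variance bound in Assumption~\ref{assumption:func_noise}b).

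Next I would collect terms: after taking full expectations,
\[
\E[f(x_{k+1})]\le\E[f(x_k)]-\Bigl(\eta_k\lambda_{H,\min}-\tfrac{L}{2}\eta_k^2(1+\sigma^2)\lambda_{H,\max}^2\Bigr)\E[\norm{\nabla f(x_k)}_2^2]+\tfrac{L}{2}\eta_k^2\sigma^2\lambda_{H,\max}^2.
\]
The constant-stepsize choice $\eta_k=c/\sqrt{T+1}$ with the stated bound on $c$ guarantees the bracketed coefficient is at least $\tfrac12\eta_k\lambda_{H,\min}$ (this is exactly where the upper bound $c<\lambda_{H,\min}/(2L(1+\sigma^2)\lambda_{H,\max}^2)$ is used: it forces $\tfrac{L}{2}\eta_k(1+\sigma^2)\lambda_{H,\max}^2\le\tfrac12\lambda_{H,\min}$). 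Then I would telescope from $k=0$ to $T$, using $f(x_{T+1})\ge\bar f$, to get
\[
\tfrac12\eta\lambda_{H,\min}\sum_{k=0}^{T}\E[\norm{\nabla f(x_k)}_2^2]\le f(x_0)-\bar f+(T+1)\tfrac{L}{2}\eta^2\sigma^2\lambda_{H,\max}^2,
\]
with $\eta=c/\sqrt{T+1}$. Dividing by $\tfrac12\eta\lambda_{H,\min}(T+1)$ and recognizing that the average over $k$ equals $\E[\norm{\nabla f(\bar x_T)}_2^2]$ for $\bar x_T$ drawn uniformly from $\{x_0,\dots,x_T\}$, and substituting $\eta=c/\sqrt{T+1}$ so that $(T+1)\eta^2=c^2$ and $\eta(T+1)=c\sqrt{T+1}$, yields exactly the stated bound $\frac{2(f(x_0)-\bar f+c^2L\sigma^2\lambda_{H,\max}^2)}{c\lambda_{H,\min}\sqrt{T+1}}$.

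I do not anticipate a serious obstacle here; the argument is essentially the textbook nonconvex SGD analysis adapted to a bounded preconditioner, and all the needed ingredients (smoothness, unbiasedness, relaxed variance, eigenvalue bounds on $H_k$) are exactly the hypotheses of Assumption~\ref{assumption:func_noise}. The only points requiring a little care are: verifying that the preconditioner eigenvalue bounds survive the Kronecker structure $H_k=\calD(\hat R_k)\otimes\calD(\hat L_k)$ (since $\lambda_{\min}(A\otimes B)=\lambda_{\min}(A)\lambda_{\min}(B)$ and similarly for the max, this reduces to boundedness of $\calD(\hat L_k)$ and $\calD(\hat R_k)$ individually, which is the content of Assumption~\ref{assumption:func_noise}c) and is further motivated by Proposition~\ref{prop:precond_bound}); keeping the conditioning/filtration bookkeeping straight when taking expectations of the cross term; and making the stepsize-constant algebra explicit so the bracketed coefficient is cleanly lower-bounded by $\tfrac12\eta_k\lambda_{H,\min}$. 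None of these is more than routine.
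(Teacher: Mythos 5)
Your proposal is correct and follows essentially the same route as the paper's proof: the descent lemma applied to the preconditioned update, unbiasedness plus the relaxed variance bound from Assumption~\ref{assumption:func_noise}b), the eigenvalue bounds on $H_k$, the stepsize condition on $c$ to absorb the gradient-dependent second-order term, and telescoping with $\eta_k=c/\sqrt{T+1}$. The only cosmetic difference is that you bound $\E\left[\norm{H_kg_k}_2^2\right]$ directly via the bias--variance decomposition, whereas the paper splits $\norm{H_kg_k}_2^2\le 2\norm{H_k\nabla f(x_k)}_2^2+2\norm{H_k(g_k-\nabla f(x_k))}_2^2$, so your noise term comes out a factor of $2$ smaller, which still implies (indeed slightly strengthens) the stated bound.
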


\begin{proof}
Without any ambiguity, $\norm{\cdot}$ denotes the $L_2$ norm of a vector or the spectral norm of a matrix. By Lipschitz smoothness, we have
\[
\begin{aligned}
&f(x_{k+1})\\
\leq&f(x_k)+\inner{\nabla f(x_k),x_{k+1}-x_k}+\frac{L}{2}\norm{x_{k+1}-x_k}^2\\
=&f(x_k)-\eta_k\inner{\nabla f(x_k),H_kg_k}+\frac{L\eta_k^2}{2}\norm{H_kg_k}^2\\
\leq&f(x_k)-\eta_k\inner{\nabla f(x_k),H_kg_k}+L\eta_k^2\norm{H_k\nabla f(x_k)}^2\\
&+L\eta_k^2\norm{H_k(\nabla f(x_k)-g_k)}^2.\\
\end{aligned}
\]
Rearranging the terms and taking expectations, we get
\[
\begin{aligned}
&\eta_k\E\left[\norm{\nabla f(x_k)}^2_{H_k}\right]\\
\leq&\E[f(x_k)]-\E[f(x_{k+1})]+L\eta_k^2\E\left[\norm{H_k\nabla f(x_k)}^2\right]\\
&+L\sigma^2\eta_k^2\norm{H_k}^2(1+\norm{\nabla f(x_k)}^2).
\end{aligned}
\]
By the choice of $c$, we have 
\[
\begin{aligned}
&\frac{1}{2}\eta_k\norm{\nabla f(x_k)}^2_{H_k}\\
\geq &L\eta_k^2\left(\norm{H_k\nabla f(x_k)}^2+\sigma^2\norm{H_k}^2\norm{\nabla f(x_k)}^2\right),
\end{aligned}
\] 
we have
\[
\begin{aligned}
&\frac{\sum_{k=0}^{T}\eta_k\E\left[\norm{\nabla f(x_k)}^2_{H_k}\right]}{2\sum_{k=0}^{T}\eta_k}\\
\leq&\frac{f(x_0)-\bar f+L\sigma^2\lambda_{H,\max}^2\sum_{k=0}^{T}\eta_k^2}{\sum_{k=0}^{T}\eta_k}.
\end{aligned}
\]
In particular, when $\eta_k=\frac{c}{\sqrt{T+1}}$, we have
\[
\E\left[\norm{\nabla f(\bar x_k)}^2\right]\leq\frac{2(f(x_0)-\bar f+c^2L\sigma^2\lambda_{H,\max}^2)}{c\lambda_{H,\min}\sqrt{T+1}}.
\]
\end{proof}

\subsection{Nonsmooth Nonconvex Training Loss} \label{app-nonsmooth}
Conventional techniques in stochastic optimization for nonsmooth nonconvex scenarios typically rely on the time-homogeneity of the associated dynamical system, as shown in \cite{benaim2005stochastic,davis2020stochastic}. Given a locally Lipschitz function $f$, by Rademacher's theorem, $f$ is differentiable almost everywhere. Thus, we have the following definition of subdifferential for a locally Lipschitz function.
\begin{definition}
The Clark subdifferential or subgradient \cite{clarke1990optimization} is defined as 
\[
\partial f(x):=\left\{
\begin{aligned}
&y:\;x_k\rightarrow x,\,\nabla f(x_k)\rightarrow y,\\
&\text{where $f$ is differentiable at $x_k$}
\end{aligned}
\right\}.
\]
\end{definition}

\begin{definition}
A locally Lipschitz function is $C^p$-Whitney stratifiable \cite{davis2020stochastic}, if the graph of $f$: ${\rm graph}(f):=\{(x,t):f(x)=t\}$ can be decomposed into finite $C^p$ manifolds, called strata, satisfying
\begin{enumerate}
    \item For any two strata $\calM_1$ and $\calM_2$, the following implication holds:
    \[    \calM_1\cap\overline{\calM_2}\neq\emptyset\;\Longrightarrow \calM_1\subset\overline{\calM_2}
    \]
    \item For any sequence of points \( z_k \) in a stratum \( \calM_1 \) converging to a point \( \bar{z} \) in a stratum \( \calM_2 \), if the corresponding normal vectors \( v_k \in N_{\calM_1}(z_k) \) converge to a vector \( v \), then the inclusion \( v \in N_{\calM_2}(\bar{z}) \) holds. Here $N_{\calM_i}$ is the normal space of $\calM_i$. 
\end{enumerate}
\end{definition}
For example, the function $-|x|$ is a $C^\infty$-Whitney stratifiable function, with its graph decomposable into the sets $\{(0,0)\}$, $\{(t,-t):t>0\}$ and $\{(t,t):t<0\}$.

\begin{theorem}
Suppose Assumption \ref{assumption:nonsmooth} holds, and assume the sequence \( \{x_k\} \) remains within a compact set. If the learning rate satisfies \( \sum_{k=1}^\infty \eta_k = \infty \) and \( \sum_{k=1}^\infty \eta_k^2 < \infty \), then
\[
\lim_{k \rightarrow \infty} {\rm dist}(x_k, \Omega) = 0,
\]
where \( \Omega := \{x : 0 \in \partial f(x)\} \) is the set of stationary points.
\end{theorem}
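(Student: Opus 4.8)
The plan is to run the differential-inclusion (ODE) method for stochastic approximation in the style of Bena\"{\i}m--Hofbauer--Sorin, adapted to nonsmooth tame optimization as in Davis et al., with one extra ingredient to handle the iteration-dependent preconditioner. First I would rewrite the recursion as a perturbed stochastic approximation: $x_{k+1}=x_k+\eta_k(y_k+\zeta_k)$ with drift $y_k=-H_kd_k\in -H_k\,\partial f(x_k)$ and noise $\zeta_k=-H_k\xi_k$. Since $f$ is $\ell$-Lipschitz, $\partial f$ is bounded by $\ell$ on the compact set containing $\{x_k\}$, and $\sup_k\lambda_{\max}(H_k)\le M$, so $\{y_k\}$ is bounded; moreover $\E[\zeta_k\mid\mathcal F_{k-1}]=0$ and $\E[\|\zeta_k\|_2^2\mid\mathcal F_{k-1}]\le M^2\sigma^2$, so $\zeta_k$ is a bounded-variance martingale difference. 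Combined with $\sum_k\eta_k^2<\infty$, the series $\sum_k\eta_k\zeta_k$ converges almost surely (Doob), which lets us discard the noise asymptotically.

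\textbf{Interpolated process and asymptotic pseudotrajectory.} Next I would form the piecewise-affine interpolation $X(\cdot)$ of $\{x_k\}$ using time increments $\eta_k$, and show $X$ is an asymptotic pseudotrajectory of the differential inclusion $\dot x\in -\bar H\,\partial f(x)$. The nontrivial point is that the instantaneous drift lies in $-H_k\partial f(x_k)$ rather than in a single fixed set-valued map, so a direct application of the time-homogeneous theory is unavailable. Here Assumption \ref{assumption:nonsmooth}c) enters: along any stretch of iterates staying near a cluster point the Ces\`aro averages of $H_k$ converge to $\bar H$, so averaging the drift over a time window of vanishing length but containing many iterations replaces $H_k$ by $\bar H$ up to an error that tends to zero, while upper semicontinuity, convexity and boundedness of $\partial f$ (together with equicontinuity of $X$, which follows from the boundedness of $y_k$ and a.s.\ smallness of the noise increments) control the $\partial f(x_k)$ part. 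This yields that the limit set $L(X)$ of the interpolation is nonempty, compact, connected, and internally chain transitive for the flow of $\dot x\in -\bar H\,\partial f(x)$.

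\textbf{Lyapunov function via the tame chain rule, and conclusion.} Because $f$ is Whitney stratifiable and $\ell$-Lipschitz, it obeys a chain rule along absolutely continuous curves: for any solution $x(\cdot)$ of the inclusion and a.e.\ $t$, $\frac{d}{dt}f(x(t))=\langle v,\dot x(t)\rangle$ for \emph{every} $v\in\partial f(x(t))$. Choosing $v=d(t)$, the element realizing $\dot x(t)=-\bar H d(t)$, gives $\frac{d}{dt}f(x(t))=-d(t)^\top\bar H d(t)\le -\lambda_{\min}(\bar H)\|d(t)\|_2^2\le 0$, with equality iff $d(t)=0$, i.e.\ iff $0\in\partial f(x(t))$, using crucially that $\bar H\succ 0$. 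Thus $f$ is a strict Lyapunov function for the inclusion with equilibrium set exactly $\Omega$, and the set of critical values $f(\Omega)$ has empty interior by the stratification (the tame Sard-type statement). The Bena\"{\i}m--Hofbauer--Sorin convergence theorem then forces $L(X)\subseteq\Omega$, and since $\{x_k\}$ lies in a compact set this is equivalent to $\dist(x_k,\Omega)\to 0$, which is the claim.

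\textbf{Main obstacle.} The crux of the argument is the asymptotic-pseudotrajectory step: showing that, despite $H_k$ changing at every iteration, the interpolation still tracks the \emph{fixed} averaged inclusion $\dot x\in -\bar H\,\partial f(x)$. This demands carefully quantifying the averaging error over shrinking time windows and combining Assumption \ref{assumption:nonsmooth}c) with the a.s.\ convergence of $\sum_k\eta_k\zeta_k$ and uniform equicontinuity of $X$ on compacta; the rest (the Lyapunov/internal-chain-transitivity machinery and the tame chain rule) is by now standard. A secondary technical point is verifying that $f(\Omega)$ has empty interior in the present, merely Whitney-stratifiable (not necessarily subanalytic) setting, for which one invokes the corresponding result used by Davis et al.
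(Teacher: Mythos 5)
Your proposal is correct and follows essentially the same route as the paper: both build the piecewise-affine interpolation, control the martingale noise under \(\sum_k\eta_k^2<\infty\), use Assumption \ref{assumption:nonsmooth}c) to show the interpolated process asymptotically tracks the averaged inclusion \(\dot x\in-\bar H\,\partial f(x)\), and conclude via the Davis et al.\ machinery. The only difference is presentational: the paper invokes their Theorem 3.2 (and a tracking lemma of Duchi--Ruan) as black boxes, whereas you unpack its contents explicitly (chain rule for Whitney stratifiable functions, Lyapunov descent with \(\bar H\succ0\), Sard-type property of critical values, Bena\"{\i}m--Hofbauer--Sorin).
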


\begin{proof}
Define the interpolated process \( x(t) \) for \( \{x_k\} \) as follows:
\[
x(t) := x_k + \frac{t - t_{k-1}}{\eta_k} (x_{k+1} - x_k), \quad \text{for } t \in [t_{k-1}, t_k),
\]
where \( t_k := \eta_1 + \cdots + \eta_k \), \(t_0=0\). Define \( y(t) := H_k d_k \) for \( t \in [t_{k-1}, t_k) \), where $d_k\in\partial f(x_k)$. Thus, both \( x(t) \) and \( y(t) \) are piecewise linear functions. We also define time-shifted versions \( y^t(\cdot) := y(t + \cdot) \).

Let \( x_t(\cdot) \) denote the solution to the following ODE:
\[
\dot{x}_t(\tau) = -y(\tau), \quad x_t(t) = x(t), \quad \text{for any } \tau \geq t.
\]
By Assumption \ref{assumption:nonsmooth}, \( \sup_k \norm{d_k} \leq \ell \), so \( \sup_{t \geq 0} \norm{y(t)} \leq M \ell \). Therefore, the class of functions \( \{x_t(\cdot) : t \geq 0\} \) is uniformly equicontinuous. Using the assumptions on \( \{\xi_k\} \), the learning rate \( \{\eta_k\} \), and the boundedness of \( H_k \), it follows from \cite[Lemma A.1]{duchi2018stochastic} that for any \( T > 0 \),
\[
\lim_{t \rightarrow \infty} \sup_{\tau \in [t, t + T]} \norm{x(\tau) - x_t(\tau)} = 0.
\]

Since \( x(\cdot) \) is pointwise bounded, \( x_t(\cdot) \) is also pointwise bounded. By the Arzel\`{a}-Ascoli theorem, the equicontinuity of \( \{x_t(\cdot) : t \geq 0\} \) implies that it is relatively compact in the space of continuous functions, under the topology of uniform convergence over any compact set. The relative compactness of \( \{y^t(\cdot)\} \) can be similarly verified; see \cite{benaim2005stochastic, borkar2009stochastic} for further details on related functional analysis concepts.

For any fixed \( T > 0 \), by the definition of \( x_t(\cdot) \), we have
\[
x_t(t + T) = x_t(t) - \int_0^T y^t(s) \, ds.
\]
Now, select a subsequence \( \{t_{k_j}\} \) such that the sequences \( \{x_t(\cdot)\} \) and \( \{y^t(\cdot)\} \) converge to \(\bar x(\cdot)\) and \(\bar y(\cdot)\), respectively, as \( j \to \infty \). Letting \( j \to \infty \), we obtain
\[
\bar{x}(T) = \bar{x}(0) - \int_0^T \bar{y}(s) \, ds.
\]
Next, we show that $\bar{y}(s) \in \bar{H} \partial f(\bar{x}(s))$. Note that 
\[
\begin{aligned}
&\text{dist} \left( \bar{y}(s), \bar{H} \partial f(\bar{x}(s)) \right)\\ 
\leq& \left\| \frac{1}{N} \sum_{j=1}^N y^{t_{k_j}}(s) - \bar{y}(s) \right\|\\
&+ \text{dist} \left( \frac{1}{N} \sum_{j=1}^N y^{t_{k_j}}(s), \bar{H} \partial f(\bar{x}(s)) \right) \\
\leq& \text{dist} \left( \frac{1}{N} \sum_{j=1}^N H_{\lambda(t_{k_j} + s)} d_{\lambda(t_{k_j} + s)}, \bar{H} \partial f(\bar{x}(s)) \right) + o(1),
\end{aligned}
\]
where \( \lambda(t) = k \) such that \( t_k < t \leq t_{k+1} \). Since \( d_{\lambda(t_{k_j} + s)} \in \partial f(x_{\lambda(t_{k_j} + s)}) \), by the outer-semicontinuity of \( \partial f \), we have \(\dist\left(d_{\lambda(t_{k_j} + s)},\partial f(\bar{x}(s))\right)\to 0\). Using Assumption \ref{assumption:nonsmooth}c), we have
\[
\begin{aligned}
&\text{dist} \left( \bar{y}(s), \bar{H} \partial f(\bar{x}(s)) \right) \\
\leq &\text{dist} \left( \frac{1}{N} \sum_{j=1}^N H_{\lambda(t_{k_j} + s)} d_{\lambda(t_{k_j} + s)}, \bar{H} \partial f(\bar{x}(s)) \right) + o(1)\\
&\to 0.
\end{aligned}
\]
Thus, we conclude the following:
\begin{equation}
\label{eq:ode_nonsmooth}
\bar{x}(T) = \bar{x}(0) - \int_0^T \bar{y}(s) \, ds, \quad \text{and } \bar{y}(s) \in \bar{H} \partial f(\bar{x}(s)).
\end{equation}
By \cite[Theorem 3.2]{davis2020stochastic}, any limit point of \( \{x_k\} \) converges to the stable set of \eqref{eq:ode_nonsmooth}, namely, \( \{x : 0 \in \bar{H} \partial f(x)\} = \{x : 0 \in \partial f(x)\} = \Omega \). This completes the proof.
\end{proof}

\section{Experimental Details} \label{app-exp_detail}
\subsection{Toy Example} \label{app-toy}
Here we compare Cholesky quantization (CQ) and vanilla quantization (VQ) on a toy $2\times2$ matrix using 4-bit linear-2 quantization as introduced in \cref{sec-linear-2}. The original matrix, with eigenvalues $(10.908, 0.092)$, has a high condition number. VQ perturbs matrix elements, distorting the spectrum and producing a negative eigenvalue $-0.164$, breaking PD. In contrast, CQ quantizes the Cholesky factor, preserving structure and yielding eigenvalues $(11.310, 0.109)$, closer to the original. This shows CQ is \textit{more robust for ill-conditioned matrices}, mitigating instability and preserving spectral properties better than VQ.

\begin{table}[h]
\centering
\setlength{\tabcolsep}{1.5pt}  % Adjust column spacing
\renewcommand{\arraystretch}{0.5}  % Adjust row spacing
\small  % Use small font for better fit in single-column format
\caption{Comparison of VQ versus CQ on a toy $2\times 2$ matrix $L$.}
\label{tab:toy_example}
\begin{tabular}{l|ccc}
\toprule
Method & Original & VQ & CQ \\
\midrule
$L$ & 
$\begin{bmatrix} 
10.00 & 3.00 \\ 
3.00 & 1.00 
\end{bmatrix}$ 
& 
$\begin{bmatrix} 
10.00 & 3.60 \\ 
3.60 & 1.11 
\end{bmatrix}$ 
& 
$\begin{bmatrix} 
10.00 & 3.60 \\ 
3.60 & 1.42 
\end{bmatrix}$ 
\\
\midrule
Eigenvalues & $(10.908,  0.092)$ & $(11.275, -0.164)$ & $(11.310,  0.109)$ \\
\bottomrule
\end{tabular}
\end{table}

\subsection{Matrix Distance} \label{app-mat}
For the Frobenius norm relative error (NRE) and angle error (AE) in \cref{tab:nre_ae}, we report the cumulative errors over all preconditioners. For synthetic matrices, we randomly generate 100 instances of \( A \) via spectral decomposition to assess quantization robustness. Specifically, we construct \( A \) as:
\[
\setlength{\abovedisplayskip}{3pt}
\setlength{\belowdisplayskip}{3pt}
\setlength{\abovedisplayshortskip}{3pt}
\setlength{\belowdisplayshortskip}{3pt}
A = U \Lambda U^\top,
\]
where \( U \) is a randomly sampled orthogonal matrix obtained via QR decomposition of a Gaussian random matrix, and \( \Lambda \) is a diagonal matrix with eigenvalues geometrically spaced from \( 10^{-3} \) to \( 10^3 \). This setup ensures a high dynamic range, making small values more susceptible to quantization errors, which are further amplified during inverse 1/4-th root computations.

Additionally, we evaluate NRE and AE on preconditioners from 32-bit Shampoo training of Swin-Tiny on CIFAR-100. The results, summarized in \cref{apptab:nre_ae}, show that Cholesky quantization consistently reduces both NRE and AE compared to vanilla quantization, demonstrating its effectiveness in preserving spectral properties.

\begin{table}[ht]
    \centering
    \caption{NRE and AE on preconditioners of Swin-Tiny for vanilla quantization (VQ) and Cholesky quantization (CQ).}
    \label{apptab:nre_ae}
    \begin{small}
    \renewcommand{\arraystretch}{0.85}
    \setlength{\aboverulesep}{1pt}
    \begin{tabular}{l|cc|cc}
        \toprule
        Quantization & \multicolumn{2}{c|}{VQ} & \multicolumn{2}{c}{CQ} \\
        & NRE & AE & NRE & AE \\
        \midrule
        Epoch 25  & 36.669 & 29.669 & 9.381  & 9.344  \\
        Epoch 50 & 36.853 & 29.269 & 8.803  & 8.775  \\
        Epoch 75 & 39.494 & 30.686 & 8.814  & 8.804  \\
        Epoch 100 & 41.068 & 30.848 & 8.943  & 8.918  \\
        \bottomrule
    \end{tabular}
    \end{small}
\end{table}

\subsection{Training Hyperparameters} \label{app-exp_hyp}
For the first-order base optimizers SGDM and AdamW used in Shampoo, we maintain their optimizer states at the same precision as the model parameters, which is float32 for image classification and bfloat16 for LLM pre-training.

For SGDM, we set the initial learning rate to 0.1, the momentum parameter to 0.9, and the weight decay coefficient to $5 \times 10^{-4}$ for training CNNs on CIFAR-100 and Tiny-ImageNet, and $1 \times 10^{-4}$ for training ResNet-50 on ImageNet. For AdamW, we set the initial learning rate to $1 \times 10^{-3}$, the momentum parameters to $\beta_1 = 0.9$ and $\beta_2 = 0.999$, the small positive constant for the denominator to $1 \times 10^{-8}$, and the weight decay to $5 \times 10^{-2}$ for image classification and $0$ for LLM pre-training.

For quantization settings, we employ block-wise linear-2 quantization as introduced in \cref{sec-linear-2}, with a block size of $B \times B = 64 \times 64$. For tensors with fewer than 4096 elements, quantization is not applied.

For both 32-bit and 4-bit Shampoo, we set the small positive constant $\epsilon = 1 \times 10^{-6}$ and the preconditioner momentum parameter $\beta = 0.95$. The error state momentum parameter is set to $\beta_e = 0.95$ to align with the preconditioner update. For update intervals, we use $T_1 = 100$ and $T_2 = 500$ for experiments on CIFAR-100 and Tiny-ImageNet, $T_1 = 200$ and $T_2 = 1000$ for training ResNet-50 on ImageNet, and $T_1 =T_2 = 200$ for LLM pre-training. Additionally, Shampoo applies layer-wise preconditioning to blocks derived from large matrices, with the maximum order of the preconditioner set to 1200.

For image classification tasks, we use the traditional cross-entropy loss as the training loss. For the learning rate schedule, we employ cosine annealing with 5 epochs of linear warmup across all experiments. For data augmentation, we apply horizontal flip, random crop, and color jitter for VGG and ResNet \cite{krizhevsky2012imagenet,he2016deep}, and Mixup \cite{zhang2018mixup}, CutMix \cite{yun2019cutmix}, RandomErasing \cite{zhong2020random}, and RandAugment/AutoAugment \cite{cubuk2020randaugment,cubuk2018autoaugment} for Swin and ViT \cite{lee2021vision,liu2021swin}.

The batch size is set to 128 for experiments on CIFAR-100 and Tiny-ImageNet, 256 for training ResNet-50 on ImageNet, and 512 for training ViT-Base on ImageNet. For the total training epochs, we follow \cite{he2016deep,xie2024adan} and train Shampoo with SGDM as the base optimizer for 200 epochs when training CNNs on CIFAR-100, while SGDM itself is trained for 300 epochs on CIFAR-100. For training CNNs on Tiny-ImageNet and ViTs on CIFAR-100 and Tiny-ImageNet, we follow \cite{liu2021swin,lee2021vision} and train Shampoo with the base optimizer for 100 epochs, and the base optimizer itself for 150 epochs. For training ResNet-50 on ImageNet, we train Shampoo with SGDM as the base optimizer for 100 epochs and SGDM for 120 epochs. For training ViT-Base on ImageNet, we train Shampoo with AdamW as the base optimizer for 120 epochs and AdamW for 150 epochs.

For LLM pre-training, we follow the model settings of \cite{lialin2023relora,zhao2024galore}, with details provided in \cref{apptab:llama_hyp}. All experiments use bfloat16 to reduce memory usage. Due to limited computational resources, we shorten training and run 10K steps for LLaMA-130M and LLaMA-350M, and 2K steps for LLaMA-1B. The total effective batch size per training step is 512 with gradient accumulation. The per-iteration batch size is set to 256 for  LLaMA-130M, 128 for and LLaMA-350M, and 64 for LLaMA-1B.

\begin{table}[t]
\centering
\caption{Hyperparameters of LLaMA models for evaluation. Data amount are specified in tokens.}
\label{apptab:llama_hyp}
\small
\begin{tabular}{ccccc}
\toprule
Params & Hidden & Intermediate & Heads & Layers \\
\midrule
130M & 768  & 2048  & 12 & 12 \\
350M & 1024 & 2736  & 16 & 24 \\
1 B  & 2048 & 5461  & 24 & 32  \\
\bottomrule
\end{tabular}
\end{table}

\subsection{Memory Efficiency} \label{app-mem}
In our experiments, we report the peak GPU memory usage instead of the memory used solely by the optimizers, as the peak GPU memory usage is the primary constraint when training large-scale models in practice and is therefore our main concern. Furthermore, from the total peak GPU memory usage, we can deduce the additional memory cost introduced by the preconditioners of Shampoo on top of the base optimizers. 

For instance, when training ResNet-34 on CIFAR-100, the base optimizer SGDM incurs a peak memory usage of 1254.7 MB. The additional peak GPU memory usage caused by storing the 32-bit preconditioners of Shampoo $(L_k, R_k, L_k^{-1/4}, R_k^{-1/4})$ is calculated as the peak memory usage of 32-bit Shampoo minus 1254.7 MB, which equals 627.9 MB. With vanilla 4-bit quantization for the preconditioners, this additional memory usage drops to 86.3 MB, which is less than $1/7$ of the additional memory required by 32-bit Shampoo. Furthermore, when using 4-bit Shampoo with Cholesky quantization, the additional peak memory usage decreases further to 64.8 MB.

We now provide a brief analysis of why the increased peak memory usage of 4-bit Shampoo with Cholesky quantization (e.g., 64.8 MB) is approximately 75\% of that of vanilla 4-bit Shampoo (e.g., 86.3 MB). Vanilla 4-bit Shampoo stores the 4-bit preconditioners $(L_k, R_k, L_k^{-1/4}, R_k^{-1/4})$, as introduced in \cref{sec-vanilla}, which consist of four full matrices of the same shape in 4-bit precision. In contrast, 4-bit Shampoo with Cholesky quantization stores $(C_k^L, C_k^R, L_k^{-1/4}, R_k^{-1/4})$ as described in \cref{sec-cholesky}, where $C_k^L$ and $C_k^R$ are the lower triangular Cholesky factors of $L_k$ and $R_k$, respectively. The storage of $C_k^L$ and $C_k^R$ requires only half the space of $L_k$ and $R_k$, leading to the total storage cost of the preconditioners for 4-bit Shampoo with Cholesky quantization being approximately 75\% of that of vanilla 4-bit Shampoo.

For $L_k^{-1/4}$ and $R_k^{-1/4}$, Cholesky quantization is not applied, as they are used to precondition stochastic gradients at each iteration, as described in \cref{alg:32bit} and \cref{alg:4bit}. Restoring them from their Cholesky factors at each iteration would be computationally expensive.

\end{document}